\newcommand{\beq}{\begin{equation}}
\newcommand{\eeq}{\end{equation}}
\newcommand{\beqa}{\begin{eqnarray}}
\newcommand{\eeqa}{\end{eqnarray}}
\newcommand{\beqan}{\begin{eqnarray*}}
\newcommand{\eeqan}{\end{eqnarray*}}
\newcommand{\eqnref}[1]{(\ref{eq:#1})}
\renewcommand{\P}{\mathbb{P}}
\renewcommand{\Pr}{\mathbb{P}}
\newcommand{\E}{\mathbb{E}}
\newcommand{\Var}{\mathbb{V}}
\newcommand{\indic}[1]{\mathbb{I}\{#1\}}
\newcommand{\1}{\mathbb{I} }
\let\R\undefined %sometimes it is defined as something I don't know
\newcommand{\Real}{\mathbb{R}}
\newcommand{\bN}{\mathbb{N}}
\newcommand{\cbar}{\,|\,}
\newtheorem{predefinition}{Definition}
\newtheorem{theorem}{Theorem}
\newtheorem{preproposition}{Proposition}
\newtheorem{lemma}{Lemma}
\newtheorem{remark}{Remark}
\renewcommand{\hat}{\widehat}
\renewcommand{\phi}{\varphi}
\renewcommand{\epsilon}{\varepsilon}
\newcommand{\A}{\mathcal{A}}
\newcommand{\F}{\mathcal{F}}
\newcommand{\X}{\mathcal{X}}
\newcommand{\T}{\mathcal{T}}
\newcommand{\R}{\mathcal{R}}
\newcommand{\Rmax}{R_{\textsc{max}}}
\DeclareMathOperator*{\expect}{{\mathbb{E}}}
\def \Pcmu {P^{c\mu_k}}
\def \Tpik {\T^{\pi_k}}
\newcommand{\highlight}[1]{\noindent \textbf{#1}}
\newcommand{\noappendix}[1]{#1}
\title{Safe and efficient off-policy reinforcement learning}
\author{
R\'emi Munos\\
munos@google.com \\
Google DeepMind
\And
Thomas Stepleton \\
stepleton@google.com \\
Google DeepMind
\AND
Anna Harutyunyan \\
anna.harutyunyan@vub.ac.be \\
Vrije Universiteit Brussel
\And
Marc G. Bellemare \\
bellemare@google.com \\
Google DeepMind
}
\begin{document}

\maketitle

\begin{abstract}
In this work, we take a fresh look at some old and new algorithms for off-policy, return-based
reinforcement learning. Expressing these in a common form, we derive a novel algorithm,
Retrace($\lambda$), with three desired properties: (1) it has {\em low variance}; (2) 
it {\em safely} uses samples collected from any behaviour policy, whatever its degree of ``off-policyness''; and 
(3) it is {\em efficient} as it makes the best use of samples collected from near on-policy behaviour policies. 
We analyze the contractive nature of the related operator under both off-policy policy evaluation and control settings and derive online sample-based algorithms. We believe this is the first return-based
off-policy control algorithm converging a.s.~to $Q^*$ without the GLIE assumption (Greedy in the Limit with Infinite Exploration). As a corollary, we prove the convergence of Watkins' Q($\lambda$), which was  an open problem since 1989. 
We illustrate the benefits of Retrace($\lambda$) on a standard suite of Atari 2600 games.
\end{abstract}

One fundamental trade-off in reinforcement learning lies in the definition of the update
target: should one estimate Monte Carlo returns or bootstrap from an existing Q-function?
Return-based methods (where {\em return} refers to the sum of discounted rewards $\sum_t \gamma^t r_t$) offer some advantages over value bootstrap methods:
they are better behaved when combined with function approximation, and quickly propagate the fruits
of exploration \citep{sutton1996generalization}.
On the other hand, value bootstrap methods are more readily applied to off-policy data, a common use
case.
In this paper we show that {\em  learning from returns need not be at cross-purposes with off-policy learning. }

We start from the recent work of \citet{harutyunyan16qlambda}, who show that
naive off-policy policy evaluation, without correcting for the ``off-policyness'' of a trajectory,
still converges to the desired $Q^{\pi}$ value function provided the behavior $\mu$ and target $\pi$ policies are not too
far apart (the maximum allowed distance depends on the $\lambda$ parameter). Their $Q^\pi(\lambda)$ algorithm learns from trajectories generated by $\mu$ simply by summing discounted off-policy corrected rewards at each time step. Unfortunately, 
the assumption that $\mu$ and $\pi$ are close is restrictive, as well as difficult to uphold in the control case, where the target policy is greedy with respect to the current Q-function. In that sense this algorithm is not {\em safe}: it does not handle the case of arbitrary ``off-policyness''.

Alternatively, the Tree-backup (TB($\lambda$))  algorithm \citep{precup2000eligibility} tolerates
arbitrary target/behavior discrepancies by scaling information (here called {\em traces}) from future temporal differences by the product of target policy probabilities. TB($\lambda$)
is not {\em efficient} in the ``near on-policy'' case (similar $\mu$ and $\pi$), though, as traces may be cut prematurely, blocking learning from full returns.

In this work, we express several off-policy, return-based algorithms in a common form. From
this we derive an improved algorithm, Retrace($\lambda$), which is both {\em safe} and {\em efficient}, enjoying convergence guarantees for off-policy policy evaluation and -- more importantly -- for the control setting. 

Retrace($\lambda$) can learn from full returns retrieved from past policy data, as
in the context of experience replay \citep{lin93scaling}, which has returned to favour with
advances in deep reinforcement learning \citep{mnih15human,schaul16prioritized}.
Off-policy learning is also desirable for exploration, since it allows the agent to deviate from the target policy currently under evaluation.

To the best of our knowledge, this is the first online return-based off-policy control algorithm which does not require the GLIE (Greedy in the Limit with Infinite Exploration) assumption \citep{singh2000convergence}. In addition, we provide as a corollary the first proof of convergence of Watkins' Q$(\lambda)$ 
\citep[see, e.g.,][]{Watkins:1989,sutton-barto98}.

Finally, we illustrate the significance of Retrace($\lambda$) in a deep learning setting by applying it to the suite of Atari 2600
games provided by the Arcade Learning Environment \citep{bellemare13arcade}.

\section{Notation}

We consider an agent interacting with a Markov Decision Process $(\X, \A, \gamma, P, r)$. $\X$ is a finite state space, $\A$ the  action space, $\gamma \in [0,1)$ the discount factor, $P$ the
transition function mapping state-action pairs $(x,a) \in \X \times \A$ to distributions over $\X$, and $r : \X \times \A \to [-\Rmax,\Rmax]$ is the reward function. For notational simplicity we will consider a finite action space, but the case of infinite -- possibly continuous -- action space can be handled by the Retrace($\lambda$) algorithm as well.  A policy $\pi$ is a mapping
from $\X$ to a distribution over $\A$. A Q-function $Q$ maps each state-action pair $(x,a)$ to
a value in $\Real$; in particular, the reward $r$ is a Q-function. For a policy $\pi$ we define 
the operator $P^\pi$:
\begin{equation*}
(P^\pi Q)(x,a) := \sum_{x' \in \X} \sum_{a' \in \A} P(x' \cbar x, a) \pi(a' \cbar x') Q(x', a') .
\end{equation*}
The value function for a policy $\pi$, $Q^\pi$, describes the expected discounted sum of rewards
associated with following $\pi$ from a given state-action pair. Using operator notation, we write
this as
\begin{equation}
  Q^\pi := \sum_{t \ge 0} \gamma^t (P^\pi)^t r. \label{eq:Qpi_def}
\end{equation}
The \emph{Bellman operator} $\T^\pi$ for a policy $\pi$ is defined as $\T^\pi Q  := r + \gamma P^\pi Q$ and its fixed point is $Q^\pi$, i.e. $\T^\pi Q^\pi  = Q^\pi = (I - \gamma P^\pi)^{-1} r$. 
The \emph{Bellman optimality operator} introduces a maximization over the set of policies:
\begin{equation}
\T Q := r + \gamma \max_{\pi} P^\pi Q . \label{eq:T*}
\end{equation}
Its fixed point is $Q^*$, the unique \emph{optimal value function} \citep{Puterman:1994}. It is this quantity that we will seek to obtain when we talk about the ``control setting''.

\paragraph{Return-based Operators:}

The $\lambda$-return extension \citep{sutton1988learning} of the Bellman operators considers exponentially weighted sums of $n$-steps returns:
\begin{equation*}
\T_\lambda^\pi Q := (1 - \lambda) \sum_{n \ge 0} \lambda^n \left [ (\T^\pi)^n Q \right ] = Q + (I - \lambda \gamma P^\pi)^{-1} (\T^\pi Q - Q) ,
\end{equation*}
where $\T^\pi Q - Q$ is the \emph{Bellman residual} of $Q$ for policy $\pi$. Examination of the above
shows that $Q^\pi$ is also the fixed point of $\T_\lambda^\pi$. At one extreme ($\lambda = 0$) we have the Bellman
operator $\T_{\lambda=0}^\pi Q=\T^{\pi}Q$, while at the other ($\lambda = 1$) we have the policy evaluation operator $\T_{\lambda=1}^\pi Q=Q^{\pi}$ which can be estimated using Monte Carlo methods~\citep{sutton-barto98}. Intermediate values of $\lambda$ trade off estimation bias with sample variance \citep{kearns2000bias}.

We seek to evaluate a 
\emph{target policy} $\pi$ using trajectories drawn from a \emph{behaviour policy} $\mu$. 
If $\pi = \mu$, we are \emph{on-policy}; otherwise, we are \emph{off-policy}.
We will consider trajectories of the form:
$$x_0=x, a_0=a, r_0, x_1, a_1, r_1, x_2, a_2, r_2, \dots$$
with $a_t \sim \mu(\cdot|x_t)$, $r_t=r(x_t,a_t)$ and $x_{t+1}\sim P(\cdot|x_t, a_t)$. We denote 
by $\F_t$ this sequence up to time $t$, and write $\E_\mu$ the expectation with
respect to both $\mu$ and the MDP transition probabilities. Throughout, we write
$\|\cdot\|$ for supremum norm.

\section{Off-Policy Algorithms}\label{sec:off_policy_algorithms}

We are interested in two related off-policy learning problems. In the \emph{policy evaluation}
setting, we are given a fixed policy $\pi$ whose value $Q^\pi$ we wish to estimate from sample
trajectories drawn from a behaviour policy $\mu$. In the \emph{control} setting, we consider a
sequence of policies that depend on our own sequence of Q-functions (such as $\epsilon$-greedy policies), 
and seek to approximate $Q^*$. 

The general operator that we consider for comparing several return-based off-policy algorithms is:
\begin{equation}\label{eq:general_operator}
\R Q(x,a) := Q(x,a) +  \E_\mu \Big[ \sum_{t \ge 0} \gamma^t\Big (\prod_{s=1}^t c_s \Big) \big( r_t + \gamma \E_{\pi} Q(x_{t+1},\cdot) - Q(x_t, a_t) \big) \Big], 
\end{equation}
for some non-negative coefficients $(c_s)$, where we write $\E_{\pi} Q(x,\cdot) := \sum_{a} \pi(a|x) Q(x,a)$ and define $(\prod_{s=1}^t c_s)=1$ when $t=0$. By extension of the idea of eligibility traces \citep{sutton-barto98}, we informally call the coefficients $(c_s)$ the \emph{traces} of the operator. 
\vspace{-0.2cm}

\paragraph{Importance sampling (IS): $c_s=\frac{\pi(a_s|x_s)}{\mu(a_s|x_s)}$.} Importance sampling is the simplest way to correct for the discrepancy between $\mu$ and $\pi$ when learning from off-policy returns \citep{precup2000eligibility,precup01offpolicy,geist2014off}. The off-policy correction uses the product of the likelihood ratios between $\pi$ and $\mu$.
Notice that $\R Q$ defined in \eqref{eq:general_operator}  with this choice of $(c_s)$ yields $Q^{\pi}$ for any $Q$. For $Q=0$ we recover the basic IS estimate $\sum_{t \ge 0} \gamma^t \big (\prod_{s=1}^t c_s \big) r_t$, thus \eqref{eq:general_operator} can be seen as a variance reduction technique (with a baseline $Q$).
It is well known that IS estimates can suffer from large -- even possibly infinite -- variance (mainly due to the variance of the product $\frac{\pi(a_1|x_1)}{\mu(a_1|x_1)}\cdots \frac{\pi(a_t|x_t)}{\mu(a_t|x_t)}$), which has motivated further variance reduction techniques such as in \citep{mahmood2015off,mahmood2015emphatic,1509.05172}. 
\vspace{-0.2cm}

\paragraph{Off-policy Q$^\pi$($\lambda$) and Q$^*$($\lambda$): $c_s=\lambda$.} A recent alternative proposed by \citet{harutyunyan16qlambda} introduces an off-policy correction based on a $Q$-baseline (instead of correcting the probability of the sample path like in IS). This approach, called Q$^\pi$($\lambda$) and Q$^*$($\lambda$) for policy evaluation and control, respectively, corresponds to the choice $c_s=\lambda$. It offers the advantage of avoiding the blow-up of the variance of the product of ratios encountered with IS. Interestingly, this operator contracts around $Q^\pi$ provided that $\mu$ and $\pi$ are sufficiently close to each other. Defining $\epsilon := \max_x \| \pi(\cdot|x) - \mu(\cdot|x)\|_1$ the level of ``off-policyness'', the authors prove that the operator defined by \eqref{eq:general_operator} with $c_s=\lambda$ is a contraction mapping around $Q^{\pi}$ for $\lambda < \frac{1 - \gamma}{\gamma\epsilon}$, and around $Q^*$ for the worst case of $\lambda < \frac{1 - \gamma}{2\gamma}$. Unfortunately, Q$^\pi$($\lambda$) requires knowledge of $\epsilon$, and the condition for Q$^*$($\lambda$) is very conservative. Neither Q$^\pi$($\lambda$), nor Q$^*$($\lambda$) are safe as they do not guarantee convergence for arbitrary $\pi$ and $\mu$. 
\vspace{-0.2cm}

\paragraph{Tree-backup, TB($\lambda$): $c_s = \lambda \pi(a_s|x_s)$.} The TB($\lambda$) algorithm of \citet{precup2000eligibility}
corrects for the target/behaviour discrepancy by multiplying each term of the sum by the product
of target policy probabilities. The corresponding operator defines a contraction mapping for any policies $\pi$ and $\mu$, which makes it a safe algorithm. However, this algorithm is not efficient in the near on-policy case (where $\mu$ and $\pi$ are similar) as it unnecessarily cuts the traces, preventing it to make use of full returns: indeed we need not discount stochastic on-policy transitions (as shown by \citeauthor{harutyunyan16qlambda}'s results about Q$^\pi$). 

\vspace{-0.2cm}

\paragraph{Retrace($\lambda$): 
$c_s=\lambda \min\Big(1, \frac{\pi(a_s|x_s)}{\mu(a_s|x_s)}\Big).$}
Our contribution is an algorithm -- Retrace$(\lambda)$ -- that takes the best of the three previous algorithms.
Retrace$(\lambda)$ uses an importance sampling ratio truncated at $1$. Compared to IS, it does not suffer from the variance explosion of the product of IS ratios. Now, similarly to $Q^{\pi}(\lambda)$ and unlike TB($\lambda$), it does not cut the traces in the on-policy case, making it possible to benefit from the full returns. In the off-policy case, the traces are safely cut, similarly to TB($\lambda$). In particular, $\min\big(1, \frac{\pi(a_s|x_s)}{\mu(a_s|x_s)}\big)\geq \pi(a_s|x_s)$: Retrace($\lambda$) does not cut the traces as much as TB($\lambda$).
In the subsequent sections, we will show the following:
\begin{itemize}
    \item For any traces $0\leq c_s\leq \pi(a_s|x_s) / \mu(a_s|x_s)$ (thus including the Retrace($\lambda$) operator), the return-based operator \eqref{eq:general_operator} is a $\gamma$-contraction around $Q^\pi$, for {\em arbitrary} policies $\mu$ and $\pi$
    \item In the control case (where $\pi$ is replaced by a sequence of increasingly greedy policies) the online Retrace($\lambda$) algorithm converges a.s.~to $Q^*$, without requiring the GLIE assumption.
    \item As a corollary, Watkins's Q$(\lambda)$ converges a.s.~to $Q^*$.
\end{itemize}

\begin{table}
\begin{center}
\begin{tabular}{@{}lcccc@{}}
\toprule
& Definition & Estimation & Guaranteed & Use full returns \\
& of $c_s$ & variance & convergence$\dagger$ & (near on-policy) \\
\midrule
Importance sampling & $\frac{\pi(a_s|x_s)}{\mu(a_s|x_s)}$ & High & for any $\pi$, $\mu$ & yes \\ 
\midrule
$Q^{\pi}(\lambda)$ & $\lambda$ & Low & for $\pi$ close to $\mu$ & yes \\ 
\midrule
TB($\lambda$) &  $\lambda\pi(a_s|x_s)$ & Low & for any $\pi$, $\mu$ & no \\
\midrule
Retrace($\lambda$) &  $\lambda\min\Big(1, \frac{\pi(a_s|x_s)}{\mu(a_s|x_s)}\Big)$ & Low & for any $\pi$, $\mu$  & yes \\
\toprule
\end{tabular}
\end{center}
\caption{Properties of several algorithms defined in terms of the general operator given in \eqnref{general_operator}. $\dagger$Guaranteed convergence of the expected operator $\R$.\label{table:algorithms_table}}
\end{table}

\section{Analysis of Retrace($\lambda$)}

We will in turn analyze both off-policy policy evaluation and control settings.
We will show that $\R$ is a contraction mapping in both settings (under a mild additional assumption for the control case).

\subsection{Policy Evaluation}

Consider a fixed target policy $\pi$. For ease of exposition we consider a fixed behaviour policy $\mu$, noting that our result extends to the setting of sequences of behaviour policies $(\mu_k : k \in \bN)$. 

Our first result states the $\gamma$-contraction of the operator \eqnref{general_operator} defined by any set of non-negative coefficients $c_s = c_s(a_s, \F_s)$ (in order to emphasize that $c_s$ can be a function of the whole history $\F_s$) under the assumption that $0\leq c_s \leq \frac{\pi(a_s | x_s)}{\mu(a_s | x_s)}$.

\begin{theorem}\label{thm:retrace.contraction}
The operator $\R$ defined by \eqnref{general_operator} has a unique fixed point $Q^\pi$. Furthermore, if for each 
$a_s \in \A$ and each history $\F_s$ we have $c_s = c_s(a_s, \F_s) \in \big [0, \frac{\pi(a_s | x_s)}{\mu(a_s | x_s)}\big ]$, then for any Q-function $Q$
$$\| \R Q- Q^{\pi}  \| \leq \gamma \|Q - Q^{\pi}\|.$$
\end{theorem}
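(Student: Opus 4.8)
The plan is to exploit the fact that $\R$ is \emph{affine} in $Q$ (the traces $c_s$ depend only on the history, never on $Q$) and to show that the induced linear map contracts toward $Q^\pi$ with factor $\gamma$. First I would verify that $Q^\pi$ is indeed a fixed point: conditioning each summand on $(\F_t, a_t)$ fixes $(x_t,a_t)$, and the inner factor $r_t + \gamma\E_\pi Q^\pi(x_{t+1},\cdot) - Q^\pi(x_t,a_t)$ then has conditional expectation $(\T^\pi Q^\pi)(x_t,a_t) - Q^\pi(x_t,a_t) = 0$, since $\T^\pi Q^\pi = Q^\pi$. As the traces $\prod_{s=1}^t c_s$ are $(\F_t,a_t)$-measurable, every term of the series vanishes in expectation by the tower property, so $\R Q^\pi = Q^\pi$.

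Next, because $\R$ is affine, $\R Q - Q^\pi = \R Q - \R Q^\pi$ is linear in the error $\Delta := Q - Q^\pi$. Writing $C_t := \prod_{s=1}^t c_s$ and subtracting the two series leaves only the $Q$-dependent part of each inner factor, namely $\gamma\E_\pi\Delta(x_{t+1},\cdot) - \Delta(x_t,a_t)$, so that $\R Q - Q^\pi = \Delta(x,a) + \E_\mu\big[\sum_{t\ge0}\gamma^t C_t\big(\gamma\E_\pi\Delta(x_{t+1},\cdot) - \Delta(x_t,a_t)\big)\big]$. The key manipulation is a reindexing: the $t=0$ term contributes $-\Delta(x_0,a_0) = -\Delta(x,a)$, cancelling the leading term, and shifting the index in the $\gamma\E_\pi\Delta(x_{t+1},\cdot)$ piece (together with $C_t = C_{t-1}c_t$) regroups the series as $\E_\mu\big[\sum_{t\ge1}\gamma^t C_{t-1}\big(\E_\pi\Delta(x_t,\cdot) - c_t\Delta(x_t,a_t)\big)\big]$.

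The crucial step is to take the conditional expectation over $a_t \sim \mu(\cdot|x_t)$ given $\F_t$ (noting $C_{t-1}$ is $\F_t$-measurable): the bracket becomes $\sum_a\big(\pi(a|x_t) - \mu(a|x_t)c_t(a,\F_t)\big)\Delta(x_t,a)$. Here the hypothesis $0\le c_s \le \pi(a_s|x_s)/\mu(a_s|x_s)$ does all the work: it forces each weight $\pi(a|x_t) - \mu(a|x_t)c_t(a,\F_t)$ to be \emph{non-negative}, and these weights sum to $1 - \eta_t$, where $\eta_t := \sum_a \mu(a|x_t)c_t(a,\F_t) = \E_\mu[c_t|\F_t] \in [0,1]$. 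The bracket is therefore a sub-probability average of $\Delta$, bounded in modulus by $(1-\eta_t)\|\Delta\|$. I would stress that although an individual $c_t$ can be large, its $\mu$-average $\eta_t$ is at most $1$ precisely because $c_t \le \pi/\mu$; this is exactly what controls the whole series.

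Finally, using $C_{t-1}\ge0$ and the triangle inequality gives $\|\R Q - Q^\pi\| \le \E_\mu\big[\sum_{t\ge1}\gamma^t C_{t-1}(1-\eta_t)\big]\,\|\Delta\|$. Writing $m_t := \E_\mu[C_t]$, the identity $\E_\mu[C_{t-1}\eta_t] = \E_\mu[C_{t-1}\E_\mu[c_t|\F_t]] = \E_\mu[C_t] = m_t$ collapses the coefficient into the telescoping sum $\sum_{t\ge1}\gamma^t(m_{t-1}-m_t)$; since each increment $m_{t-1}-m_t\ge0$ (as $\eta_t\le1$) and $\gamma^t\le\gamma$ for $t\ge1$, this is at most $\gamma\sum_{t\ge1}(m_{t-1}-m_t) = \gamma(m_0 - \lim_t m_t)\le \gamma m_0 = \gamma$. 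Hence $\|\R Q - Q^\pi\|\le\gamma\|Q-Q^\pi\|$, and this contraction toward $Q^\pi$ also forces uniqueness (any fixed point $Q'$ satisfies $\|Q'-Q^\pi\|\le\gamma\|Q'-Q^\pi\|$, so $Q'=Q^\pi$). I expect the reindexing/telescoping of the second step and the careful bookkeeping of which quantities are $\F_t$-measurable to be the only delicate points; the bound itself is driven entirely by the non-negativity of the weights, which is where the constraint $c_s\le\pi/\mu$ enters.
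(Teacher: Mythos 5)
Your proposal is correct and follows essentially the same route as the paper: you re-derive Lemma~\ref{lem:retrace_delta_lemma} inline by subtracting the series for $\R Q$ and $\R Q^\pi$ and reindexing, then condition on $\F_t$ to expose the non-negative weights $\pi(b|x_t)-\mu(b|x_t)c_t(b,\F_t)$, exactly as in the paper's proof of Theorem~\ref{thm:retrace.contraction}. The only cosmetic difference is the final bookkeeping — your telescoping bound $\sum_{t\ge1}\gamma^t(m_{t-1}-m_t)\le\gamma$ versus the paper's evaluation of the same series as $\gamma C-(C-1)$ with $C\ge1$ — which are equivalent manipulations.
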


The following lemma will be useful in proving Theorem \ref{thm:retrace.contraction} (proof in the appendix).
\begin{lemma}\label{lem:retrace_delta_lemma}
The difference between $\R Q$ and its fixed point $Q^\pi$ is
\begin{equation*}
\label{eq:RQ-Qpi}
\R Q(x,a)- Q^{\pi}(x,a) =\E_{\mu} \Big[ \ \sum_{t\geq 1} \gamma^{t} \Big( \prod_{i=1}^{t-1} c_i\Big) \Big( \big[ \E_\pi [(Q-Q^{\pi})(x_{t}, \cdot)] - c_{t} (Q-Q^{\pi})(x_{t},a_{t}) \big] \Big)\Big].
\end{equation*}
\end{lemma}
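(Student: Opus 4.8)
The plan is to obtain the identity as a purely algebraic rearrangement of the defining series \eqref{eq:general_operator}, resting on two facts: that the one-step temporal differences of $Q^\pi$ vanish in conditional expectation, and that the trace weights are adapted to the filtration $(\F_t)$.

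First I would introduce shorthand: write the one-step temporal difference of a Q-function $Q$ at time $t$ as $\delta_t^Q := r_t + \gamma\,\E_\pi Q(x_{t+1},\cdot) - Q(x_t,a_t)$, and the cumulative trace weight as $w_t := \gamma^t \prod_{s=1}^t c_s$ (so $w_0=1$), so that \eqref{eq:general_operator} reads $\R Q(x,a) = Q(x,a) + \E_\mu\big[\sum_{t\ge 0} w_t\,\delta_t^Q\big]$. Subtracting $Q^\pi(x,a)$ and splitting $Q(x,a)=Q^\pi(x,a)+(Q-Q^\pi)(x,a)$ gives
\[
\R Q(x,a) - Q^\pi(x,a) = (Q-Q^\pi)(x,a) + \E_\mu\Big[\sum_{t\ge 0} w_t\,\delta_t^Q\Big].
\]

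The crux is to show that the analogous series for $Q^\pi$ itself vanishes, i.e.\ $\E_\mu\big[\sum_{t\ge 0} w_t\,\delta_t^{Q^\pi}\big]=0$; this is the one genuinely probabilistic (rather than algebraic) step, and where I expect the only real obstacle to lie. The key observations are that each weight $w_t$ is $\F_t$-measurable (since $c_s=c_s(a_s,\F_s)$ with $s\le t$), while by the Markov property together with the Bellman fixed-point equation $Q^\pi = r + \gamma P^\pi Q^\pi$ one has $\E_\mu[\delta_t^{Q^\pi}\mid\F_t] = r(x_t,a_t)+\gamma\sum_{x'}P(x'|x_t,a_t)\E_\pi Q^\pi(x',\cdot)-Q^\pi(x_t,a_t)=0$. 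Conditioning on $\F_t$ and applying the tower rule then annihilates every term. The interchange of expectation and the infinite sum is justified because the trace bound $c_s\le \pi(a_s|x_s)/\mu(a_s|x_s)$ forces $\E_\mu[\prod_{s=1}^t c_s\mid\F_0]\le 1$, so each summand is dominated by $\gamma^t$ times a bounded quantity.

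Subtracting this vanishing series, the temporal differences enter only through their difference $\delta_t^Q-\delta_t^{Q^\pi} = \gamma\,\E_\pi(Q-Q^\pi)(x_{t+1},\cdot) - (Q-Q^\pi)(x_t,a_t)$, yielding $\R Q - Q^\pi = (Q-Q^\pi)(x,a) + \E_\mu[\sum_{t\ge 0} w_t(\gamma\E_\pi(Q-Q^\pi)(x_{t+1},\cdot) - (Q-Q^\pi)(x_t,a_t))]$. I would then reindex the ``forward'' half by $t\mapsto t+1$, using $\gamma w_{t-1} = \gamma^t\prod_{s=1}^{t-1}c_s$ and $w_t = \gamma^t(\prod_{s=1}^{t-1}c_s)c_t$, and observe that the $t=0$ diagonal term $-(Q-Q^\pi)(x_0,a_0)=-(Q-Q^\pi)(x,a)$ exactly cancels the leading $(Q-Q^\pi)(x,a)$. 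Collecting the surviving $t\ge 1$ terms under the common factor $\gamma^t\prod_{i=1}^{t-1}c_i$ produces precisely $\E_\mu[\sum_{t\ge 1}\gamma^t(\prod_{i=1}^{t-1}c_i)(\E_\pi(Q-Q^\pi)(x_t,\cdot) - c_t(Q-Q^\pi)(x_t,a_t))]$, which is the claimed identity.
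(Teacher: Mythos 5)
Your proposal is correct and follows essentially the same route as the paper: both arguments combine the vanishing conditional expectation of the Bellman residual $\delta_t^{Q^\pi}$ (the fixed-point property $\R Q^\pi = Q^\pi$, which the paper establishes inside the proof of Theorem~\ref{thm:retrace.contraction} by exactly your tower-rule computation) with the same reindexing $t \mapsto t+1$ that cancels the leading $(Q-Q^\pi)(x,a)$ term and folds $c_{t}$ into the diagonal term. The only difference is organizational --- you subtract first and telescope afterwards, whereas the paper telescopes $\R Q$ into the $c_{t+1}$-weighted form before subtracting the identical expansion of $\R Q^\pi$ --- and your explicit domination argument ($\E_\mu[\prod_{s\le t} c_s] \le 1$) justifying the sum--expectation interchange is a point the paper leaves implicit.
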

\begin{proof}[Proof (Theorem \ref{thm:retrace.contraction})]
The fact that $Q^\pi$ is the fixed point of the operator $\R$ is obvious from \eqnref{general_operator} since
$\E_{x_{t+1}\sim P(\cdot|x_t, a_t)} \big[ r_t +\gamma \E_{\pi} Q^{\pi}(x_{x+1},\cdot)-Q^{\pi}(x_t, a_t)\big] = (\T^{\pi}Q^{\pi}-Q^{\pi})(x_t, a_t) = 0$, since $Q^{\pi}$ is the fixed point of $\T^{\pi}$. Now, from Lemma~\ref{lem:retrace_delta_lemma}, and defining $\Delta Q := Q-Q^{\pi}$, we have
\begin{align*}
\R Q(x,a)- Q^{\pi}(x,a) &= \sum_{t\geq 1} \gamma^{t} \expect_{\mathclap{\substack{x_{1:t}\\a_{1:t}}}} \hspace{0.5em} \Big[ \Big( \prod_{i=1}^{t-1} c_i\Big) \Big( \big[ \E_\pi \Delta Q(x_{t}, \cdot) - c_{t} \Delta Q(x_{t},a_{t}) \big] \Big)\Big] \\
&= \sum_{t\geq 1} \gamma^{t} \expect_{\mathclap{\substack{x_{1:t}\\a_{1:t-1}}}} \hspace{0.5em} \Big[ \Big( \prod_{i=1}^{t-1} c_i\Big) \Big( \big[ \E_\pi \Delta Q(x_{t}, \cdot) - \E_{a_t} [c_{t}(a_t,\F_t) \Delta Q(x_{t},a_{t}) | \F_t] \big] \Big)\Big] \\
&= \sum_{t\geq 1} \gamma^{t} \expect_{\mathclap{\substack{x_{1:t}\\a_{1:t-1}}}} \hspace{0.5em} \Big[ \Big( \prod_{i=1}^{t-1} c_i\Big)  \sum_b \big( \pi(b|x_t) - \mu(b|x_t) c_t(b,\F_t) \big)\Delta Q(x_{t},b) \Big].
\end{align*}
Now since $\pi(a|x_t) - \mu(a|x_t) c_t(b,\F_t) \geq 0$, we have that $\R Q(x,a)- Q^{\pi}(x,a)= \sum_{y,b} w_{y,b}\Delta Q(y,b)$, i.e.~a linear combination of $\Delta Q(y,b)$ weighted by non-negative coefficients:
$$w_{y,b} :=  \sum_{t\geq 1} \gamma^{t} \expect_{\mathclap{\substack{x_{1:t}\\a_{1:t-1}}}} \hspace{0.5em} \Big[ \Big( \prod_{i=1}^{t-1} c_i\Big)  \big( \pi(b|x_t) - \mu(b|x_t) c_t(b,\F_t) \big) \1\{x_t=y\}\Big].$$
The sum of those coefficients is:
\begin{align*}
\sum_{y,b} w_{y,b} & = \sum_{t\geq 1} \gamma^{t} \expect_{\mathclap{\substack{x_{1:t}\\a_{1:t-1}}}} \hspace{0.5em} \Big[ \Big( \prod_{i=1}^{t-1} c_i\Big)  \sum_b \big( \pi(b|x_t) -   \mu(b|x_t)c_t(b,\F_t) \big) \Big] \\
& = \sum_{t\geq 1} \gamma^{t} \expect_{\mathclap{\substack{x_{1:t}\\a_{1:t-1}}}} \hspace{0.5em} \Big[ \Big( \prod_{i=1}^{t-1} c_i\Big)  \E_{a_t} [ 1- c_t(a_t,\F_t) | \F_t] \Big] 
= \sum_{t\geq 1} \gamma^{t} \expect_{\mathclap{\substack{x_{1:t}\\a_{1:t}}}} \hspace{0.5em} \Big[ \Big( \prod_{i=1}^{t-1} c_i\Big)  (1- c_t) \Big] \\
& = \E_{\mu} \Big[ \sum_{t\geq 1} \gamma^{t} \Big( \prod_{i=1}^{t-1} c_i\Big)  -  \sum_{t\geq 1} \gamma^{t} \Big( \prod_{i=1}^{t} c_i\Big) \Big] = \gamma C - (C-1), 
\end{align*}
where $C:= \E_{\mu} \big[ \sum_{t\geq 0} \gamma^{t} \big( \prod_{i=1}^{t} c_i\big)\big]$. Since $C\geq 1$, we have that $\sum_{y,b} w_{y,b} \leq \gamma.$
Thus $\R Q(x,a)- Q^{\pi}(x,a)$ is a sub-convex combination of $\Delta Q(y,b)$ weighted by non-negative coefficients $w_{y,b}$ which sum to (at most) $\gamma$, thus $\R$ is a $\gamma$-contraction mapping around $Q^{\pi}$.
\end{proof}

\begin{remark}\label{rem:eta}
Notice that the coefficient $C$ in the proof of Theorem \ref{thm:retrace.contraction} depends on
$(x,a)$. If we write $\eta(x,a) := 1 - (1 - \gamma) \E_\mu \left [ \sum_{t \ge 0} \gamma^t (\prod_{s=1}^t c_s) \right ]$, then we have shown that 
\begin{equation*}
| \R Q(x,a) - Q^\pi(x,a) | \le \eta(x,a) \| Q - Q^\pi \|.
\end{equation*}
Thus $\eta(x,a) \in [0, \gamma]$ is a $(x,a)$-specific contraction coefficient, which is $\gamma$
when $c_1 = 0$ (the trace is cut immediately) and can be close to zero when learning from full returns ($c_t \approx 1$ for all $t$).
\end{remark}

\subsection{Control}

In the control setting, the single target policy $\pi$ is replaced by a sequence of policies $(\pi_k)$ which
depend on $(Q_k)$. 
While most prior work has focused on strictly greedy policies, here we consider the larger class of \emph{increasingly greedy} sequences. We now make this notion precise. 
\begin{predefinition}
We say that a sequence of policies $(\pi_k : k \in \bN)$ is \emph{increasingly greedy} w.r.t.~a sequence $(Q_k: {k \in \bN})$ of Q-functions  if the following property holds for all $k$:
$P^{\pi_{k+1}}Q_{k+1} \geq P^{\pi_{k}}Q_{k+1}.$
\end{predefinition}
Intuitively, this means that each $\pi_{k+1}$ is at least as greedy as the previous policy $\pi_k$ for $Q_{k+1}$. Many natural sequences of policies are increasingly greedy, including $\epsilon_k$-greedy policies (with non-increasing $\epsilon_k$) and softmax policies (with non-increasing temperature). See proofs in the appendix. 

We will assume that $c_s=c_s(a_s,\F_s)=c(a_s,x_s)$ is Markovian, in the sense that it depends on $x_s, a_s$ (as well as the policies $\pi$ and $\mu$) only but not on the full past history. This allows us to define the (sub)-probability transition operator 
$$(P^{c\mu} Q)(x,a) := \sum_{x'}\sum_{a'} p(x'|x,a)\mu(a'|x') c(a',x') Q(x',a').$$
Finally, an additional requirement to the convergence in the control case, we assume that $Q_0$ satisfies $\T^{\pi_0}Q_0\geq Q_0$ (this can be achieved by a pessimistic initialization $Q_0= -R_{MAX}/(1-\gamma)$).
\begin{theorem}\label{thm:main-result}
Consider an arbitrary sequence of behaviour policies $(\mu_k)$ (which may depend on $(Q_k)$) and a sequence of target policies $(\pi_k)$ that are increasingly greedy w.r.t.~the sequence $(Q_k)$:
$$Q_{k+1} = \R_k Q_k,$$
where the return operator $\R_k$ is defined by \eqref{eq:general_operator} for $\pi_k$ and $\mu_k$ and a Markovian $c_s=c(a_s,x_s)\in [0, \frac{\pi_k(a_s|x_s)}{\mu_k(a_s|x_s)}]$. Assume the target policies $\pi_k$ are $\epsilon_k$-away from the greedy policies w.r.t.~$Q_k$, in the sense that $\T^{\pi_k}Q_k \geq \T Q_k - \epsilon_k \|Q_k\| e$, where $e$ is the vector with 1-components. Further suppose that $\T^{\pi_0}Q_0\geq Q_0$. Then for any $k\geq 0$,
\begin{equation*}
\| Q_{k+1} - Q^*\| \leq \gamma\| Q_k -  Q^{*}\| + \epsilon_k \|Q_k\|.
\end{equation*}
In consequence, if $\epsilon_k \to 0$ then $Q_k \to Q^*$.
\end{theorem}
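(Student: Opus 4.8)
The plan is to recast the update in operator form and then prove two matching one-sided bounds. Writing $c_s=c(a_s,x_s)$ for the Markovian traces, I would first sum $\R_k$ explicitly: each term $r_t+\gamma\E_{\pi_k}Q(x_{t+1},\cdot)-Q(x_t,a_t)$ has conditional mean $(\T^{\pi_k}Q-Q)(x_t,a_t)$, and the product $\prod_{s=1}^t c_s$ together with the transitions assembles $(P^{c\mu_k})^t$, so that $\R_k Q=Q+A_k(\T^{\pi_k}Q-Q)$ with $A_k:=(I-\gamma P^{c\mu_k})^{-1}=\sum_{t\ge0}\gamma^t(P^{c\mu_k})^t$. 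Equivalently, $Q_{k+1}=\T^{\pi_k}Q_k+\gamma P^{c\mu_k}(Q_{k+1}-Q_k)$. Three elementary facts drive everything: $A_k$ and $P^{c\mu_k}$ are non-negative operators; $P^{\pi_k}-P^{c\mu_k}$ is non-negative because $c(a,x)\le\pi_k(a|x)/\mu_k(a|x)$ forces $\pi_k(a|x)-\mu_k(a|x)c(a,x)\ge0$; and $P^{c\mu_k}$ is sub-stochastic while $P^{\pi_k}$ is stochastic.

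First I would establish the monotonicity invariant $D_k:=\T^{\pi_k}Q_k-Q_k\ge0$ by induction. The base case is the hypothesis $\T^{\pi_0}Q_0\ge Q_0$. For the step, $Q_{k+1}-Q_k=A_kD_k\ge0$, and a short computation gives $\T^{\pi_k}Q_{k+1}-Q_{k+1}=\gamma(P^{\pi_k}-P^{c\mu_k})(Q_{k+1}-Q_k)\ge0$; the increasingly-greedy property then upgrades this to $D_{k+1}=\T^{\pi_{k+1}}Q_{k+1}-Q_{k+1}\ge\T^{\pi_k}Q_{k+1}-Q_{k+1}\ge0$, closing the induction. This yields the lower bound at once: from $Q_{k+1}\ge\T^{\pi_k}Q_k$, the $\epsilon_k$-away assumption $\T^{\pi_k}Q_k\ge\T Q_k-\epsilon_k\|Q_k\|e$, and the $\gamma$-contraction of $\T$ (so $\T Q_k-Q^*=\T Q_k-\T Q^*\ge-\gamma\|Q_k-Q^*\|e$), I obtain
\[
Q_{k+1}-Q^*\ge-\big(\gamma\|Q_k-Q^*\|+\epsilon_k\|Q_k\|\big)e.
\]

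Then I would prove the matching upper bound, which needs no slack. From $Q_{k+1}=A_k(\T^{\pi_k}Q_k-\gamma P^{c\mu_k}Q_k)$ and $Q^*=A_k(Q^*-\gamma P^{c\mu_k}Q^*)$ I get $Q_{k+1}-Q^*=A_k[\,\T^{\pi_k}Q_k-Q^*-\gamma P^{c\mu_k}(Q_k-Q^*)\,]$. Writing $\T^{\pi_k}Q_k-Q^*=\gamma P^{\pi_k}(Q_k-Q^*)+(\T^{\pi_k}Q^*-Q^*)$ and discarding the non-positive term $\T^{\pi_k}Q^*-Q^*\le\T Q^*-Q^*=0$, the bracket is at most $\gamma(P^{\pi_k}-P^{c\mu_k})(Q_k-Q^*)$. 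Bounding $Q_k-Q^*\le\|Q_k-Q^*\|e$ and using $(P^{\pi_k}-P^{c\mu_k})e=(I-P^{c\mu_k})e$ (as $P^{\pi_k}$ is stochastic) gives $Q_{k+1}-Q^*\le\gamma\|Q_k-Q^*\|\,A_k(I-P^{c\mu_k})e$. The crux is the telescoping identity $A_k(I-P^{c\mu_k})e=e-(1-\gamma)A_kP^{c\mu_k}e\le e$, which collapses the geometric series and pins the amplification at $\gamma$; hence $Q_{k+1}-Q^*\le\gamma\|Q_k-Q^*\|e$. Combining the two bounds yields $\|Q_{k+1}-Q^*\|\le\gamma\|Q_k-Q^*\|+\epsilon_k\|Q_k\|$, and convergence follows routinely: since $\|Q_k\|\le\|Q_k-Q^*\|+\|Q^*\|$ and $\epsilon_k\to0$, the effective factor eventually falls below one and the forcing term vanishes, so $\|Q_k-Q^*\|\to0$.

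I expect the main obstacle to be the upper bound, specifically the cancellation that reduces the bracket to $\gamma(P^{\pi_k}-P^{c\mu_k})(Q_k-Q^*)$ and the identity $A_k(I-P^{c\mu_k})e\le e$: this is precisely where the sub-stochasticity of $P^{c\mu_k}$ and the trace constraint $c\le\pi_k/\mu_k$ combine to prevent the $1/(1-\gamma)$ blow-up that a naive bound on $\gamma P^{c\mu_k}(Q_{k+1}-Q_k)$ would produce. The monotonicity induction is the second delicate point, since it is what licenses replacing $Q_{k+1}$ by $\T^{\pi_k}Q_k$ in the lower bound and is the only place the increasingly-greedy hypothesis and the initialization $\T^{\pi_0}Q_0\ge Q_0$ enter.
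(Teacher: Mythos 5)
Your proposal is correct and follows essentially the same route as the paper's proof: the same operator identity $Q_{k+1}=Q_k+(I-\gamma P^{c\mu_k})^{-1}(\T^{\pi_k}Q_k-Q_k)$, the same upper bound via the non-negative matrix $\gamma(I-\gamma P^{c\mu_k})^{-1}(P^{\pi_k}-P^{c\mu_k})$ with row sums at most $\gamma$, and the same inductive invariant $\T^{\pi_k}Q_k\geq Q_k$ (driven by the increasingly-greedy hypothesis and the identity $\T^{\pi_k}Q_{k+1}-Q_{k+1}=\gamma(P^{\pi_k}-P^{c\mu_k})(Q_{k+1}-Q_k)$) for the lower bound. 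The only differences are cosmetic, e.g.\ you invoke the one-sided contraction of $\T$ directly where the paper passes through $\T^{\pi^*}$.
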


\begin{proof}[Sketch of Proof (The full proof is in the appendix)]
Using $\Pcmu$, the Retrace$(\lambda)$ operator rewrites 
\begin{equation*}
\R_k Q = Q + \sum_{t \ge 0} \gamma^t (\Pcmu)^t (\Tpik Q - Q) = Q + (I - \gamma \Pcmu)^{-1} (\Tpik Q - Q).
\end{equation*}
We now lower- and upper-bound the term $Q_{k+1} - Q^*$.

\highlight{Upper bound on $Q_{k+1} - Q^*$.} We prove that $Q_{k+1} - Q^*\leq A_k (Q_k-Q^*)$ with $A_k := \gamma (I-\gamma P^{c\mu_k})^{-1} \big[ P^{\pi_k} -  P^{c\mu_k} \big]$.  Since $c_t \in [0, \frac{\pi(a_t|x_t)}{\mu(a_t|x_t)}]$ we deduce that $A_k$ has non-negative elements, whose sum over each row, is at most $\gamma$.
Thus
\beq
Q_{k+1} - Q^* \leq \gamma \|Q_k - Q^*\| e.\label{eq:upper-bound-Qk-Q*}
\eeq

\highlight{Lower bound on $Q_{k+1} - Q^*$.} Using the fact that $\T^{\pi_k}Q_k \geq \T^{\pi^*}Q_k - \epsilon_k\|Q_k\| e$ we have
\beqa
Q_{k+1} - Q^* &\geq& Q_{k+1} - \T^{\pi_k} Q_k  +\gamma P^{\pi^*}  (Q_k -  Q^{*}) - \gamma \epsilon_k\|Q_k\| e \notag \\
&= &\gamma P^{c\mu_k} (I-\gamma P^{c\mu_k})^{-1} (\T^{\pi_k}Q_k - Q_k)  + \gamma P^{\pi^*}  (Q_k -  Q^{*}) - \epsilon_k\|Q_k\|e.\label{eq:lb-Qk+1-Q*}
\eeqa

\highlight{Lower bound on $\Tpik Q_k - Q_k$.} Since the sequence $(\pi_k)$ is increasingly greedy w.r.t.~$(Q_k)$, we have 
\beqa
\T^{\pi_{k+1}} Q_{k+1} - Q_{k+1} &\geq& \T^{\pi_k} Q_{k+1} - Q_{k+1} = r+(\gamma P^{\pi_k}-I) \R_k Q_k \notag \\
&=& B_k (\T^{\pi_k} Q_k - Q_k),\label{eq:lb.TQk+1-Qk+1}
\eeqa
where $B_k:=\gamma [ P^{\pi_k} -   P^{c\mu_k} ] (I-\gamma P^{c\mu_k})^{-1}$.
Since $P^{\pi_k} - \Pcmu$ and $(I-\gamma P^{c\mu_k})^{-1}$ are non-negative matrices, so is $B_k$. Thus
$\T^{\pi_k} Q_k - Q_k \geq B_{k-1} B_{k-2} \dots B_{0} (\T^{\pi_0} Q_0 - Q_0) \geq 0,
$
since we assumed $T^{\pi_0} Q_0 - Q_0 \ge 0$. Thus, \eqref{eq:lb-Qk+1-Q*} implies that
$$
Q_{k+1} - Q^* \geq \gamma P^{\pi^*}  (Q_k -  Q^{*}) - \epsilon_k\|Q_k\|e.
$$
Combining the above with \eqnref{upper-bound-Qk-Q*} we deduce $\| Q_{k+1} - Q^*\| \leq \gamma\| Q_k -  Q^{*}\| +  \epsilon_k \| Q_k \| $. When $\epsilon_k\rightarrow 0$, we further deduce that $Q_k$ are bounded, thus $Q_k \to Q^*$.
\end{proof}

\subsection{Online algorithms}

So far we have analyzed the contraction properties of the expected $\R$ operators. We now describe online algorithms which can learn from sample trajectories. We analyze the algorithms in the {\em every visit} form~\citep{sutton-barto98}, which is the more practical generalization of the first-visit form. In this section, we will only consider the Retrace($\lambda$) algorithm defined with the coefficient $c=\lambda\min(1,\pi/\mu)$. For that $c$, let us rewrite the operator $P^{c\mu}$ as $\lambda P^{\pi\wedge\mu}$, where $P^{\pi\wedge\mu}Q(x,a):=\sum_y \sum_b \min(\pi(b|y),\mu(b|y)) Q(y,b)$, and write the Retrace operator $\R Q = Q+(I-\lambda\gamma P^{\pi\wedge\mu})^{-1}(\T^\pi Q-Q)$. We focus on the control case, noting that a similar (and simpler) result can be derived for policy evaluation.

\begin{theorem}\label{thm:online}
Consider a sequence of sample trajectories, with the $k^{th}$ trajectory $x_0, a_0, r_0, x_1, a_1, r_1, \dots$
generated by following $\mu_k$: $a_t\sim \mu_k(\cdot|x_t)$. For each $(x,a)$ along this trajectory, with $s$ being the time of first occurrence of $(x, a)$, update
\begin{align}
 Q_{k+1}(x, a) & \gets  Q_k(x, a) + \alpha_k\sum_{t \ge s}
\delta^{\pi_k}_t \sum_{j = s}^t \gamma^{t - j} \Big (
  \prod_{i=j+1}^t c_i \Big ) \indic{x_j, a_j = x, a},  \label{eqn:online-every-visit} 
\end{align}
where  $\delta^{\pi_k}_t := r_t + \gamma \E_{\pi_k} Q_k(x_{t+1}, \cdot) - Q_k(x_t, a_t)$, $\alpha_k = \alpha_k(x_s, a_s)$. We consider the Retrace($\lambda$) algorithm where $c_i=\lambda\min\big(1,\frac{\pi(a_i|x_i)}{\mu(a_i|x_i)}\big)$.
Assume that $(\pi_k)$ are increasingly greedy w.r.t.~$(Q_k)$ and are each $\epsilon_k$-away from the greedy policies $(\pi_{Q_k})$, 
i.e.~$\max_x\|\pi_k(\cdot|x)-\pi_{Q_k}(\cdot|x)\|_1\leq \epsilon_k$, with  $\epsilon_k\to 0$. Assume that $P^{\pi_k}$ and $P^{\pi_k\wedge\mu_k}$ asymptotically commute: $\lim_{k} \| P^{\pi_k}P^{\pi_k\wedge\mu_k} - P^{\pi_k\wedge\mu_k} P^{\pi_k}\|=0$. Assume further that (1) all states and actions are visited infinitely often: $\sum_{t\geq 0}\Pr\{ x_t, a_t = x, a\} \ge D > 0$, (2) the sample trajectories are finite in terms of the second moment of their lengths $T_k$: $\E_{\mu_k} T_k^2 < \infty$, 
  (3) the
stepsizes obey the usual Robbins-Munro conditions.
Then $Q_k \to Q^*$ a.s.
\end{theorem}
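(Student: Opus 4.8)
The plan is to read \eqref{eqn:online-every-visit} as a stochastic approximation recursion and to invoke a general almost-sure convergence theorem for a pseudo-contraction corrupted by a martingale-difference noise and a vanishing deterministic bias (of the type used in \citet{singh2000convergence}). First I would set $\Delta_k := Q_k - Q^*$ and rewrite the per-$(x,a)$ update in the canonical form
\[
\Delta_{k+1}(x,a) = \big(1-\alpha_k(x,a)\big)\Delta_k(x,a) + \alpha_k(x,a)\,\tilde F_k(x,a),
\]
where $\tilde F_k := \Delta_k + F_k$ and $F_k(x,a)$ is the random trace-weighted sum of temporal differences $\delta^{\pi_k}_t$ that \eqref{eqn:online-every-visit} attributes to $(x,a)$ along the $k$th trajectory. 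The proof then reduces to two ingredients: (i) the conditional mean $\E_{\mu_k}[\tilde F_k\mid\F_k]$ is a $\gamma$-pseudo-contraction towards $Q^*$ up to a vanishing bias, and (ii) the noise $F_k-\E_{\mu_k}[F_k\mid\F_k]$ has a well-controlled conditional second moment.

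For (i) I would compute $\E_{\mu_k}[F_k\mid\F_k]$ by summing the trace weights $\sum_{j=s}^{t}\gamma^{t-j}\prod_{i=j+1}^{t}c_i$ over every occurrence of $(x,a)$ and taking expectations; the every-visit visit-count factors are absorbed into the effective per-iteration step-size (legitimate because assumption~(1), $\sum_t\Pr\{x_t,a_t=x,a\}\ge D>0$, keeps these factors bounded away from zero). This identifies the mean as
\[
\E_{\mu_k}[\tilde F_k\mid\F_k] = \R_k Q_k - Q^* + \rho_k,
\]
where $\rho_k$ is an every-visit residual discussed below. Modulo $\rho_k$, Theorem~\ref{thm:main-result} applies verbatim and gives
\[
\|\R_k Q_k - Q^*\| \le \gamma\|Q_k - Q^*\| + \epsilon_k\|Q_k\|,
\]
so the pseudo-contraction holds with bias $b_k := \epsilon_k\|Q_k\| + \|\rho_k\|$. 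Since $\epsilon_k\to 0$, the boundedness argument at the end of the proof of Theorem~\ref{thm:main-result} shows $\|Q_k\|$ stays bounded, hence $\epsilon_k\|Q_k\|\to 0$.

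For (ii), the martingale noise must satisfy a conditional bound of the form $\E_{\mu_k}[\,|F_k(x,a)-\E_{\mu_k}[F_k\mid\F_k]|^2 \mid\F_k\,]\le C(1+\|\Delta_k\|)^2$. Because the truncated coefficients obey $c_i\le 1$, each accumulated trace is dominated by a geometric factor in $\gamma$ times the trajectory length, so assumption~(2), $\E_{\mu_k}T_k^2<\infty$, yields exactly the required second-moment control; the $(1+\|\Delta_k\|)^2$ scaling comes from the $\delta^{\pi_k}_t$ terms, which are affine in $Q_k$. Combining (i) and (ii) with the Robbins--Monro step-size conditions~(3) and the infinite-visitation guarantee~(1) (which forces the cumulative effective step-size at every $(x,a)$ to diverge while its square stays summable), the stochastic approximation theorem delivers $\Delta_k\to 0$, i.e.\ $Q_k\to Q^*$ almost surely.

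The hard part will be controlling the residual $\rho_k$ in (i). Unlike the first-visit case, the every-visit rule restarts a fresh trace at each revisit of $(x,a)$, so when $\E_{\mu_k}[F_k\mid\F_k]$ is unfolded the operators $P^{\pi_k}$ (arising from the $\E_{\pi_k}$ inside $\delta^{\pi_k}_t$) and $P^{\pi_k\wedge\mu_k}$ (arising from the trace products) appear in \emph{both} orders, whereas the clean resolvent form $(I-\gamma P^{c\mu_k})^{-1}(\Tpik Q_k - Q_k)$ of $\R_k$ requires them in a fixed order. The discrepancy $\rho_k$ is therefore built from commutators $P^{\pi_k}P^{\pi_k\wedge\mu_k} - P^{\pi_k\wedge\mu_k}P^{\pi_k}$, and this is exactly what the asymptotic-commutation hypothesis $\|P^{\pi_k}P^{\pi_k\wedge\mu_k} - P^{\pi_k\wedge\mu_k}P^{\pi_k}\|\to 0$ is designed to neutralise: it pushes $\rho_k$ into the vanishing bias $b_k$ rather than letting it spoil the contraction factor $\gamma$. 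Showing that the \emph{accumulated} commutator error (summed over the resolvent expansion) is $o(\|Q_k\|)$, uniformly enough to be absorbed into $b_k$, is the crux of the whole argument.
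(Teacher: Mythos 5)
There is a genuine gap, and it sits exactly where you say the ``crux'' is --- but the commutation hypothesis is not spent where you spend it. Your step (i) asserts that ``modulo $\rho_k$, Theorem~\ref{thm:main-result} applies verbatim'' to give $\|\R_k Q_k - Q^*\| \le \gamma\|Q_k-Q^*\| + \epsilon_k\|Q_k\|$. It does not: the lower bound on $\R_k Q_k - Q^*$ in the proof of Theorem~\ref{thm:main-result} contains the extra term $\gamma P^{c\mu_k}(I-\gamma P^{c\mu_k})^{-1}(\T^{\pi_k}Q_k - Q_k)$, which is only discarded because $\T^{\pi_k}Q_k - Q_k \ge 0$, and that non-negativity is propagated inductively from the hypothesis $\T^{\pi_0}Q_0 \ge Q_0$. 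Theorem~\ref{thm:online} deliberately drops that hypothesis, and even if it held at $k=0$ the induction would be broken by the stochastic online iterates. So the one-sided pseudo-contraction you feed into the stochastic-approximation theorem is not established, and this is precisely the hard part of the online proof, not a bookkeeping issue.

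The paper's proof closes this hole with a second, separate stochastic-approximation argument on the Bellman residual $\xi_k := \T^{\pi_k}Q_k - Q_k$, which satisfies $\xi_{k+1} \ge (1-\alpha_k)\xi_k + \alpha_k(B_k\xi_k + \text{noise})$ with $B_k = \gamma(P^{\pi_k}-\lambda P^{\pi_k\wedge\mu_k})(I-\gamma\lambda P^{\pi_k\wedge\mu_k})^{-1}$. The matrix $B_k$ is non-negative but \emph{not} a contraction (its row sums can exceed $1$), whereas the reversed product $A_k = \gamma(I-\gamma\lambda P^{\pi_k\wedge\mu_k})^{-1}(P^{\pi_k}-\lambda P^{\pi_k\wedge\mu_k})$ is a $\gamma$-contraction. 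The asymptotic-commutation assumption is used exactly here, to bound $\|B_k - A_k\| \le \gamma(1-\lambda\gamma)^{-2}\eta_k \to 0$, which yields $\liminf_k \xi_k \ge 0$ via a minorizing sequence and Proposition~4.5 of \cite{bertsekas1996neurodynamic}. By contrast, you deploy the commutation hypothesis against a conjectured every-visit residual $\rho_k$ in the conditional mean of the update; but that mean is exactly $D_k(\R_k Q_k - Q_k)$ with no commutator terms --- the every-visit discrepancy is an online-versus-offline bias handled as in Prop.~5.2 of \cite{bertsekas1996neurodynamic} and absorbed into the vanishing perturbation, with $D_k$ absorbed into the step-size. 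Your items (ii) and (iii) (noise variance via $\E_{\mu_k}T_k^2<\infty$, step-size conditions, boundedness of $Q_k$) match the paper, but without the $\liminf\xi_k \ge 0$ argument the proof does not go through.
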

The proof extends similar convergence proofs of TD($\lambda$) by \citet{bertsekas1996neurodynamic} and of optimistic policy iteration by \cite{Tsitsiklis:2003}, and is provided in the appendix. Notice that compared to Theorem~\ref{thm:main-result} we do not assume that $\T^{\pi_0}Q_0-Q_0\geq0$ here. However, we make the additional (rather technical) assumption that $P^{\pi_k}$ and $P^{\pi_k\wedge\mu_k}$ commute at the limit. This is satisfied for example when the probability assigned by the behavior policy $\mu_k(\cdot|x)$ to the greedy action $\pi_{Q_k}(x)$ is independent of $x$. Examples include $\epsilon$-greedy policies, or more generally mixtures between the greedy policy $\pi_{Q_k}$ and an arbitrary distribution $\mu$ (see Lemma~5 in the appendix for the proof):
\beq \label{eq:example.mu}
\mu_k(a|x) = \varepsilon\frac{\mu(a|x)}{1-\mu(\pi_{Q_k}(x)|x)}\1\{a\neq \pi_{Q_k}(x)\} + (1-\varepsilon) \1\{a=\pi_{Q_k}(x)\}.
\eeq
Notice that the mixture coefficient $\varepsilon$ needs not go to $0$.

\section{Discussion of the results}

\subsection{Choice of the trace coefficients $c_s$} 

Theorems~\ref{thm:retrace.contraction} and~\ref{thm:main-result} ensure convergence to $Q^{\pi}$ and $Q^*$ for any trace coefficient $c_s\in [0,\frac{\pi(a_s|x_s)}{\mu(a_s|x_s)}]$. 
However, to make the best choice of $c_s$, we need to consider the {\em speed} of convergence, which depends on both (1) the variance of the online estimate, which indicates how many online updates are required in a single iteration of $\R$, and (2) the contraction coefficient of $\R$. 

\highlight{Variance:} The variance of the estimate strongly depends on the variance of the product trace $(c_1 \dots c_t)$, which 
is not an easy quantity to control in general, as the $(c_s)$ are usually not independent. However, 
assuming independence and stationarity of $(c_s)$, we have that $\Var\big(\sum_t\gamma^t c_1\dots c_t\big)$ is at least  $\sum_t \gamma^{2t} \Var(c)^t$, which is finite only if $\Var(c)<1/\gamma^2$. Thus, an important requirement for a numerically stable algorithm is for $\Var(c)$ to be as small as possible, and certainly no more than $1/\gamma^2$. This rules out importance sampling (for which $c=\frac{\pi(a|x)}{\mu(a|x)}$, and $\Var(c|x) = \sum_a \mu(a|x) \big(\frac{\pi(a|x)}{\mu(a|x)} - 1\big)^2$, which may be larger than $1/\gamma^2$ for some $\pi$ and $\mu$), and is the reason we choose $c\leq 1$.

\highlight{Contraction speed:} The contraction coefficient $\eta \in [0, \gamma]$ of $\R$ (see Remark~\ref{rem:eta}) depends on how much the traces have been cut, and should be as small as possible (since it takes $\log (1 / \epsilon) / \log (1 / \eta)$ iterations of $\R$ to obtain an $\epsilon$-approximation).
It is smallest when the traces are not cut at all (i.e.~if $c_s=1$ for all $s$, $\R$ is the policy evaluation operator which produces $Q^{\pi}$ in a single iteration). Indeed, when the traces are cut, we do not benefit from learning from full returns (in the extreme, $c_1=0$ and $\R$ reduces to the (one step) Bellman operator with $\eta=\gamma$).

A reasonable trade-off between low variance (when $c_s$ are small) and high contraction speed (when $c_s$ are large) is given by Retrace($\lambda$),
for which we provide the convergence of the online algorithm. 

If we relax the assumption that the trace is Markovian (in which case only the result for policy evaluation has been proven so far) we could trade off a low trace at some time for a possibly larger-than-$1$ trace at another time, as long as their product is less than $1$. A possible choice could be 
\beq\label{eq:non.markov.c}
c_s = \lambda \min\Big(\frac{1}{c_1\dots c_{s-1}},\frac{\pi(a_s|x_s)}{\mu(a_s|x_s)}\Big).
\eeq

\subsection{Other topics of discussion}

\paragraph{No GLIE assumption.}
The crucial point of Theorem~\ref{thm:main-result} is that convergence to $Q^*$ occurs for {\em arbitrary} behaviour policies. Thus the online result in Theorem~\ref{thm:online} does not require the behaviour policies to become greedy in the limit with infinite exploration \citep[i.e.~GLIE assumption,][]{singh2000convergence}. We believe Theorem~\ref{thm:online} provides the first convergence result to $Q^*$ for a $\lambda$-return (with $\lambda>0$) algorithm that does not require this (hard to satisfy) assumption.
\vspace{-0.3cm}

\paragraph{Proof of Watkins' Q($\lambda$).}
As a corollary of Theorem~\ref{thm:online} when selecting our target policies $\pi_k$ to be greedy w.r.t.~$Q_k$ (i.e.~$\epsilon_k=0$), we deduce that Watkins' Q($\lambda$) 
\citep[
e.g.,][]{Watkins:1989,sutton-barto98} converges a.s.~to $Q^*$ (under the assumption that $\mu_k$ commutes asymptotically with the greedy policies, which is satisfied for e.g.~$\mu_k$ defined by \eqref{eq:example.mu}). We believe this is the first such proof.
\vspace{-0.3cm}

\paragraph{Increasingly greedy policies}
The assumption that the sequence of target policies $(\pi_k)$ is increasingly greedy w.r.t.~the sequence of $(Q_k)$ is more general that just considering greedy policies w.r.t.~$(Q_k)$ (which is Watkins's Q($\lambda$)), and leads to more efficient algorithms. Indeed, using non-greedy target policies $\pi_k$ may speed up convergence as the traces are not cut as frequently. Of course, in order to converge to $Q^*$, we eventually need the target policies (and not the behaviour policies, as mentioned above) to become greedy in the limit (i.e.~$\epsilon_k\to 0$ as defined in Theorem~\ref{thm:main-result}).
\vspace{-0.3cm}

\paragraph{Comparison to $Q^\pi(\lambda)$.} 
Unlike Retrace($\lambda$), $Q^{\pi}(\lambda)$ does not need to know the behaviour policy $\mu$. However, it fails to converge when $\mu$ is far from $\pi$.
Retrace($\lambda$) uses its knowledge of $\mu$ (for the chosen actions) to cut the traces and safely handle arbitrary policies $\pi$ and $\mu$.

\vspace{-0.3cm}

\paragraph{Comparison to TB($\lambda$).} 
Similarly to $Q^{\pi}(\lambda)$, TB($\lambda$) does not need the knowledge of the behaviour policy $\mu$. But as a consequence, TB($\lambda$) is not able to benefit from possible near on-policy situations, cutting traces unnecessarily when $\pi$ and $\mu$ are close.
\vspace{-0.3cm}

\paragraph{Estimating the behavior policy.} In the case $\mu$ is unknown, it is reasonable to build an estimate $\hat \mu$ from observed samples and use $\hat\mu$ instead of $\mu$ in the definition of the trace coefficients $c_s$. This may actually even lead to a better estimate, as analyzed by \citet{LiAistats2015}.
\vspace{-0.3cm}

\paragraph{Continuous action space.} Let us mention that Theorems~\ref{thm:retrace.contraction} and~\ref{thm:main-result} extend to the case of (measurable) continuous or infinite action spaces. The trace coefficients will make use of the densities $\min(1,d\pi/d\mu)$ instead of the probabilities $\min(1,\pi/\mu)$. This is not possible with TB($\lambda$).
\vspace{-0.3cm}

\paragraph{Open questions include:} (1) Removing the technical assumption that $P^{\pi_k}$ and $P^{\pi_k\wedge\mu_k}$ asymptotically commute, (2) Relaxing the Markov assumption in the control case in order to allow trace coefficients $c_s$ of the form \eqref{eq:non.markov.c}.

\section{Experimental Results}

\begin{figure*}
\begin{center}
\includegraphics[width=2.2in]{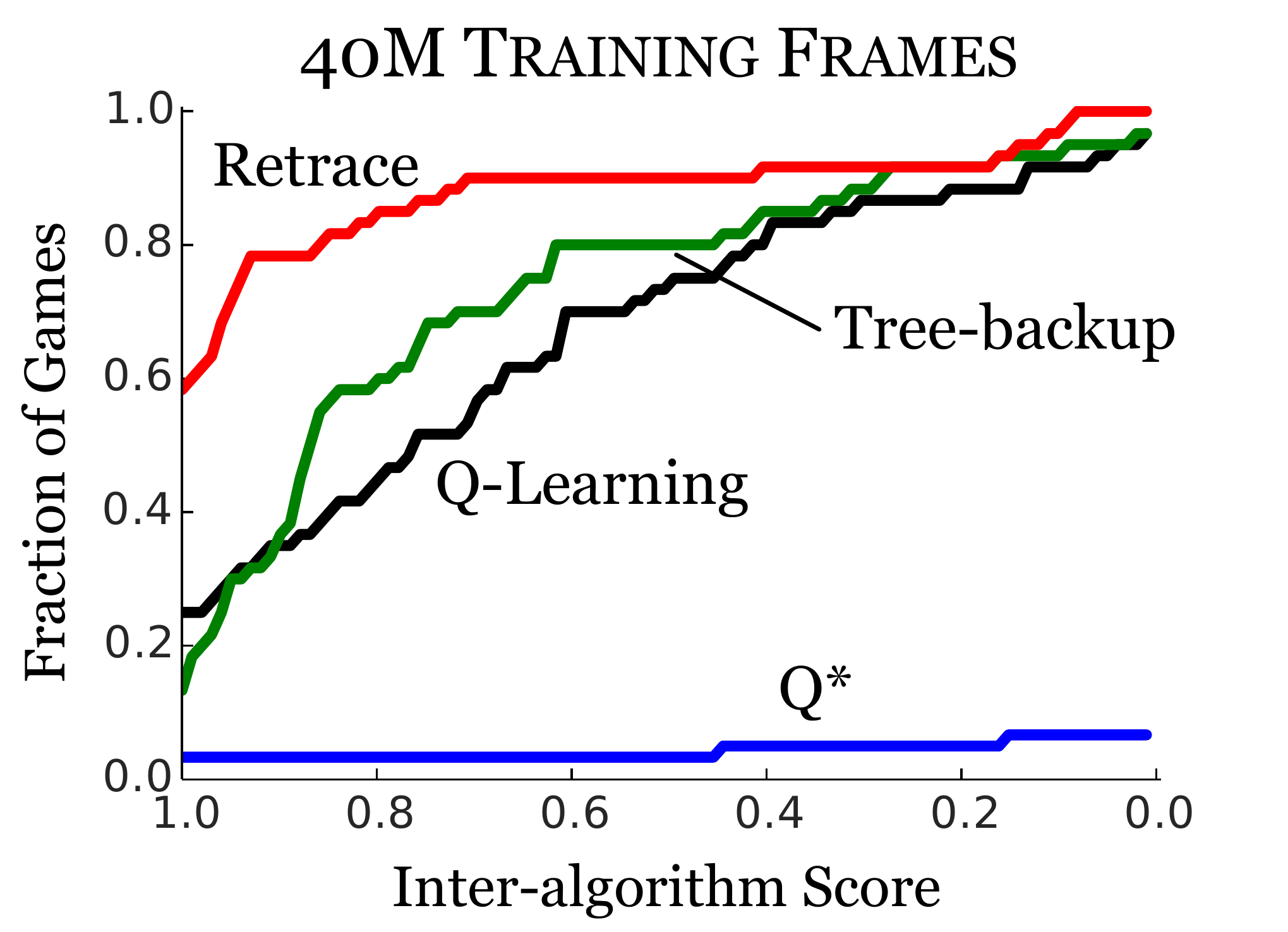}
\includegraphics[width=2.2in]{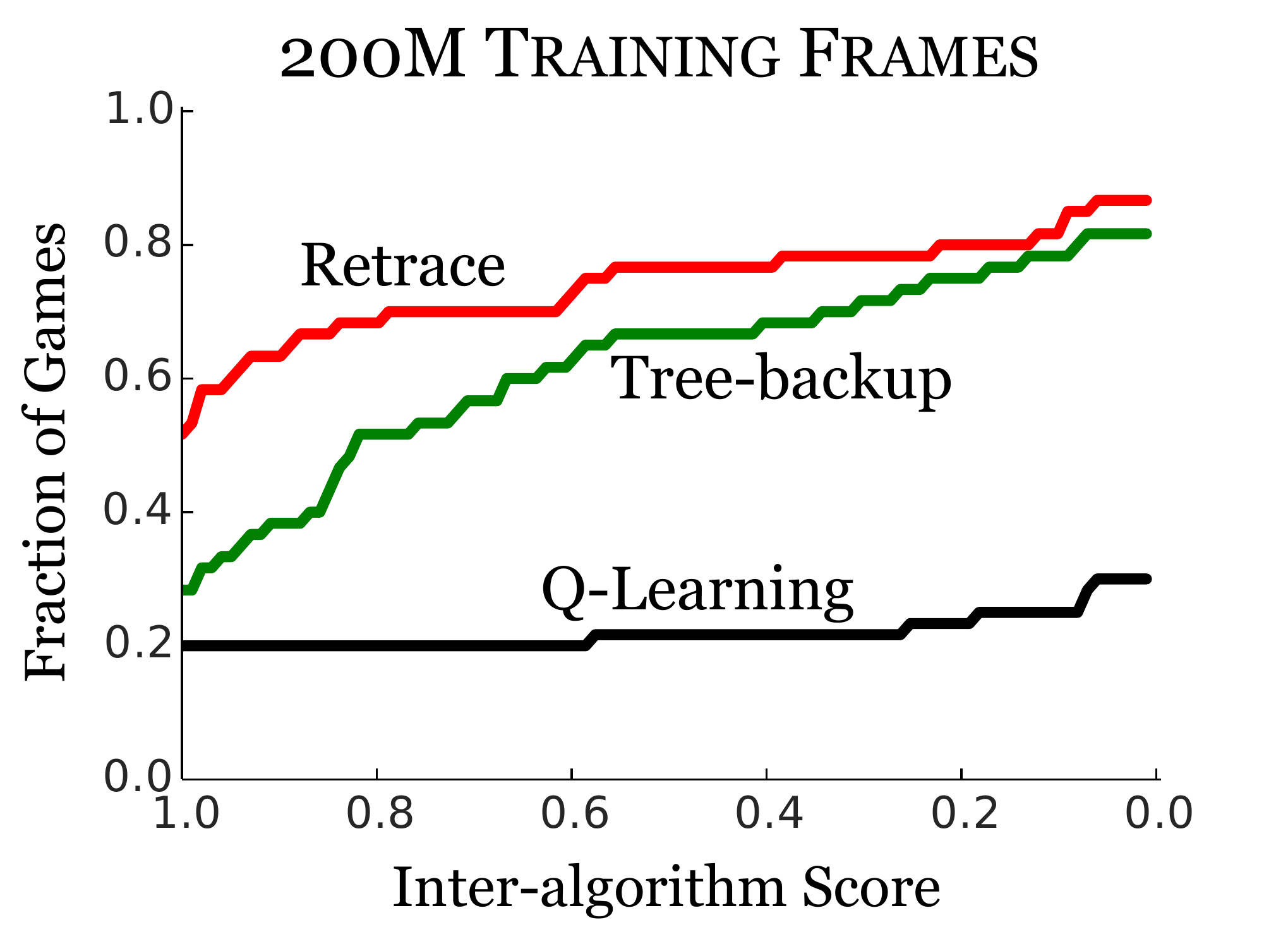}
\end{center}
\vspace{-1.0em}
\caption{Inter-algorithm score distribution for $\lambda$-return ($\lambda = 1$) variants and Q-Learning ($\lambda = 0$).\label{fig:score_distribution_200M}}
\end{figure*}

To validate our theoretical results, we employ Retrace$(\lambda)$ in an
experience replay \citep{lin93scaling} setting, where sample transitions are stored within
a large but bounded \emph{replay memory} and subsequently replayed as if they were new experience. 
Naturally, older data in the memory is usually drawn from a policy which differs from the current
policy, offering an excellent point of comparison for the algorithms presented in Section
\ref{sec:off_policy_algorithms}.

Our agent adapts the DQN architecture of \citet{mnih15human} to replay short
sequences from the memory (details in the appendix) instead of single
transitions. The Q-function target update for a sample sequence
$x_t, a_t, r_t, \cdots, x_{t+k}$ is
\begin{equation*}
\Delta Q(x_t, a_t) = \sum_{s=t}^{t+k-1} \gamma^{s-t} \Big( \prod_{i=t+1}^s c_i \Big) \big [ r(x_s, a_s) + \gamma \E_\pi Q(x_{s+1}, \cdot) - Q(x_s, a_s) \big ] .
\end{equation*}
We compare our algorithms' performance on 60 different Atari 2600 games in the Arcade Learning Environment \citep{bellemare13arcade} using \citeauthor{bellemare13arcade}'s inter-algorithm score distribution. Inter-algorithm scores are normalized so that 0 and 1 respectively correspond to the worst and best score for a particular game, within the set of algorithms under comparison. If $g \in \{ 1, \dots, 60 \}$ is a game and $z_{g,a}$ the inter-algorithm score on $g$ for algorithm $a$, then the score distribution function is $f(x) := | \{ g : z_{g,a} \ge x \} | / 60$.
Roughly, a strictly higher curve corresponds to a better algorithm. 

Across values of $\lambda$, $\lambda = 1$ performs best, save for $Q^*(\lambda)$ where $\lambda = 0.5$ obtains slightly superior performance. However, is highly sensitive to the choice of $\lambda$ (see Figure \ref{fig:score_distribution_200M}, left, and Table 2 in the appendix). Both Retrace($\lambda$) and TB$(\lambda)$ achieve dramatically higher performance than Q-Learning early on and maintain their advantage throughout.
Compared to TB($\lambda$), Retrace($\lambda$) offers a narrower but still marked advantage, being the best performer on 30 games; TB($\lambda$) claims 15 of the remainder.
Per-game details are given in the appendix.

\paragraph{Conclusion.} Retrace($\lambda$) can be seen as an algorithm that automatically adjusts -- efficiently and safely -- the length of the return to the degree of "off-policyness" of any available data.

\paragraph{Acknowledgments.} The authors thank Daan Wierstra, Nicolas Heess, Hado van Hasselt, Ziyu Wang, David Silver, Audrunas Gr\={u}slys, Georg Ostrovski, Hubert Soyer, and others at Google DeepMind for their very useful feedback on this work.

\bibliographystyle{apalike}
\bibliography{retrace}

\noappendix{
\appendix

\section{Proof of Lemma \ref{lem:retrace_delta_lemma}}

\begin{proof}[Proof (Lemma \ref{lem:retrace_delta_lemma})] 
Let $\Delta Q := Q-Q^{\pi}$. We begin by rewriting \eqnref{general_operator}:
\begin{align*}
\R Q(x,a) &=\sum_{t \ge 0} \gamma^t \E_\mu \left [ \Big ( \prod_{s=1}^t c_s \Big ) \Big ( r_t + \gamma \left [ \E_\pi Q(x_{t+1}, \cdot) - c_{t+1} Q(x_{t+1}, a_{t+1}) \Big ) \right ] \right ].
\end{align*}
Since $Q^\pi$ is the fixed point of $\R$, we have
\begin{align*}
Q^{\pi}(x,a) = \R Q^{\pi}(x,a) &=\sum_{t \ge 0} \gamma^t \E_\mu \left [ \Big ( \prod_{s=1}^t c_s \Big ) \Big ( r_t + \gamma \left [ \E_\pi Q^{\pi}(x_{t+1}, \cdot) - c_{t+1} Q^{\pi}(x_{t+1}, a_{t+1}) \Big ) \right ] \right ],
\end{align*}
from which we deduce that
\begin{align*}
\R Q(x,a)- Q^{\pi}(x,a)& =\sum_{t\geq 0} \gamma^{t} \E_{\mu}  \Big[ \Big( \prod_{s=1}^{t} c_s\Big) \Big( \gamma \big[ \E_\pi \Delta Q(x_{t+1}, \cdot) - c_{t+1} \Delta Q(x_{t+1},a_{t+1}) \big] \Big) \Big] \\
&= \sum_{t\geq 1} \gamma^{t} \E_{\mu} \Big[ \Big( \prod_{s=1}^{t-1} c_s\Big) \Big( \big[ \E_\pi \Delta Q(x_{t}, \cdot) - c_{t} \Delta Q(x_{t},a_{t}) \big] \Big)\Big].
\end{align*}
\end{proof}

\section{Increasingly greedy policies}
Recall the definition of an increasingly greedy sequence of policies.
\begin{predefinition}
 We say that a sequence of policies $(\pi_k)$ is increasingly greedy w.r.t.~a sequence of functions $(Q_k)$ if the following property holds for all $k$:
$$P^{\pi_{k+1}}Q_{k+1} \geq P^{\pi_{k}}Q_{k+1}.$$
\end{predefinition}

It is obvious to see that this property holds if all policies $\pi_k$ are greedy w.r.t.~$Q_k$. Indeed in such case, $\T^{\pi_{k+1}}Q_{k+1} = \T  Q_{k+1} \geq \T^{\pi} Q_{k+1}$ for any $\pi$.

We now prove that this property holds for $\epsilon_k$-greedy policies (with non-increasing $(\epsilon_k)$) as well as soft-max policies (with non-decreasing $(\beta_k)$), as stated in the two lemmas below. 

Of course not all policies satisfy this property (a counter-example being $\pi_k(a|x):=\arg\min_{a'}Q_k(x,a')$).

\begin{lemma}
Let $(\epsilon_k)$ be a non-increasing sequence. Then the sequence of policies $(\pi_k)$ which are $\epsilon_k$-greedy w.r.t.~the sequence of functions $(Q_k)$ is increasingly greedy w.r.t.~that sequence.
\end{lemma}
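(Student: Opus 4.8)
The plan is to reduce the operator inequality $P^{\pi_{k+1}}Q_{k+1}\geq P^{\pi_k}Q_{k+1}$ to a pointwise statement about states, and then verify that statement by a short algebraic manipulation that exploits the non-increasing $\epsilon_k$. First I would note that for any state-action pair $(x,a)$,
$$(P^{\pi_{k+1}} Q_{k+1})(x,a) - (P^{\pi_k} Q_{k+1})(x,a) = \sum_{y} P(y \cbar x, a) \big[ \E_{\pi_{k+1}} Q_{k+1}(y, \cdot) - \E_{\pi_k} Q_{k+1}(y, \cdot) \big].$$
Since the transition probabilities $P(y \cbar x,a)$ are non-negative, it suffices to establish, for every state $y$,
$$\E_{\pi_{k+1}} Q_{k+1}(y, \cdot) \geq \E_{\pi_k} Q_{k+1}(y, \cdot),$$
which strips away the MDP dynamics and leaves a purely pointwise claim about the two $\epsilon$-greedy distributions.

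Next I would expand both sides using the definition of an $\epsilon$-greedy policy. Fixing $y$, I set $M := \max_a Q_{k+1}(y,a)$, let $a_k^*(y) \in \argmax_a Q_k(y,a)$ be the greedy action used by $\pi_k$, put $V := Q_{k+1}(y, a_k^*(y))$, and let $\bar{Q} := \frac{1}{|\A|}\sum_a Q_{k+1}(y,a)$ be the uniform average. Because $\pi_{k+1}$ is $\epsilon_{k+1}$-greedy with respect to $Q_{k+1}$, its greedy mass falls on the maximiser of $Q_{k+1}$, so $\E_{\pi_{k+1}} Q_{k+1}(y,\cdot) = (1-\epsilon_{k+1}) M + \epsilon_{k+1} \bar{Q}$, whereas $\pi_k$ is $\epsilon_k$-greedy with respect to the earlier function $Q_k$, giving $\E_{\pi_k} Q_{k+1}(y,\cdot) = (1-\epsilon_k) V + \epsilon_k \bar{Q}$. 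The two elementary facts I will invoke are $M \geq V$ (the maximum of $Q_{k+1}$ dominates its value at any fixed action) and $M \geq \bar{Q}$ (the maximum dominates the average).

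Finally I would verify the inequality directly. Writing $1-\epsilon_{k+1} = (1-\epsilon_k) + (\epsilon_k - \epsilon_{k+1})$ and collecting terms yields
$$\E_{\pi_{k+1}} Q_{k+1}(y,\cdot) - \E_{\pi_k} Q_{k+1}(y,\cdot) = (1-\epsilon_k)(M - V) + (\epsilon_k - \epsilon_{k+1})(M - \bar{Q}),$$
which is a sum of two non-negative terms: $(1-\epsilon_k)\geq 0$ multiplies $M-V\geq 0$, and $(\epsilon_k - \epsilon_{k+1})\geq 0$ (by the non-increasing hypothesis) multiplies $M-\bar{Q}\geq 0$. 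Hence the difference is $\geq 0$, which proves the pointwise claim and therefore the operator inequality for all $k$.

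The only step requiring care — a bookkeeping subtlety rather than a genuine obstacle — is tracking which Q-function each policy is greedy with respect to: $\pi_{k+1}$ is greedy for $Q_{k+1}$ and thus genuinely attains the maximum $M$, while $\pi_k$ is greedy for $Q_k$, so its greedy action $a_k^*(y)$ is generally suboptimal for $Q_{k+1}$ and yields only $V\leq M$. The decomposition makes it transparent that both the ``correct target'' effect $(M-V)$ and the ``more greedy'' effect coming from the shrinking $\epsilon_k$ contribute in the favourable direction, so no cancellation can occur and the bound holds without any additional assumptions.
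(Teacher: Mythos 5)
Your proof is correct and follows essentially the same route as the paper: the paper's two chained inequalities (lowering the weight on the max in favour of the uniform average, then replacing $\max_b Q_{k+1}(y,b)$ by $Q_{k+1}(y,\arg\max_b Q_k(y,b))$) are exactly the two non-negative terms $(\epsilon_k-\epsilon_{k+1})(M-\bar{Q})$ and $(1-\epsilon_k)(M-V)$ in your decomposition. Your version merely makes the bookkeeping explicit; no substantive difference.
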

\begin{proof}
From the definition of an $\epsilon$-greedy policy we have:
\beqan
P^{\pi_{k+1}}Q_{k+1}(x,a) &=& \sum_y p(y|x,a) \big[ (1-\epsilon_{k+1})\max_b Q_{k+1}(y,b)+\epsilon_{k+1} \frac{1}{A}\sum_b Q_{k+1}(y,b) \big]\\
&\geq& \sum_y p(y|x,a) \big[ (1-\epsilon_{k})\max_b Q_{k+1}(y,b)+\epsilon_k \frac{1}{A}\sum_b Q_{k+1}(y,b) \big]\\
&\geq & \sum_y p(y|x,a) \big[ (1-\epsilon_{k}) Q_{k+1}(y,\arg\max_b Q_k(y,b))+\epsilon_k \frac{1}{A}\sum_b Q_{k+1}(y,b)\big] \\
& = & P^{\pi_{k}}Q_{k+1},
\eeqan
where we used the fact that $\epsilon_{k+1}\leq \epsilon_k$.
\end{proof}

\begin{lemma}
Let $(\beta_k)$ be a non-decreasing sequence of soft-max parameters. Then the sequence of policies $(\pi_k)$ which are soft-max (with parameter $\beta_k$) w.r.t.~the sequence of functions $(Q_k)$ is increasingly greedy w.r.t.~that sequence.
\end{lemma}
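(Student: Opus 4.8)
The plan is to follow the template of the preceding $\epsilon_k$-greedy lemma, reducing the statement to a claim about a single softmax distribution over actions. Write $\sigma_\beta(Q)(b|y) := e^{\beta Q(y,b)}/\sum_{a} e^{\beta Q(y,a)}$ for the softmax policy. Since $P(y|x,a)\ge 0$ and $(P^{\pi}Q)(x,a)=\sum_y P(y|x,a)\sum_b \pi(b|y)Q(y,b)$, the increasingly-greedy inequality $P^{\pi_{k+1}}Q_{k+1}\ge P^{\pi_k}Q_{k+1}$ follows from the per-state inequality
\[
\sum_b \sigma_{\beta_{k+1}}(Q_{k+1})(b|y)\,Q_{k+1}(y,b)\;\ge\;\sum_b \sigma_{\beta_k}(Q_k)(b|y)\,Q_{k+1}(y,b)
\]
holding for every state $y$. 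I would fix $y$ and suppress it for the remainder.

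The first step is a monotonicity-in-$\beta$ lemma: for a fixed Q-function the map $\beta\mapsto \langle\sigma_\beta(Q),Q\rangle$ is non-decreasing. I would establish this by differentiation, using $\frac{d}{d\beta}\sigma_\beta(Q)(b)=\sigma_\beta(Q)(b)\big(Q(y,b)-\langle\sigma_\beta(Q),Q\rangle\big)$ to obtain $\frac{d}{d\beta}\langle\sigma_\beta(Q),Q\rangle=\Var_{\sigma_\beta(Q)}\big(Q(y,\cdot)\big)\ge 0$. Applied to $Q=Q_{k+1}$ with $\beta_{k+1}\ge\beta_k$, this gives $\langle\sigma_{\beta_{k+1}}(Q_{k+1}),Q_{k+1}\rangle\ge\langle\sigma_{\beta_k}(Q_{k+1}),Q_{k+1}\rangle$: raising the inverse temperature on the correct Q-function can only increase the expected value. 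This is the softmax analogue of the $\epsilon$-monotonicity step, with the variance playing the role that $\max\ge\mathrm{mean}$ played there.

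It then remains to compare the two softmax policies built from different Q-functions at the common temperature $\beta_k$, i.e. to show $\langle\sigma_{\beta_k}(Q_{k+1}),Q_{k+1}\rangle\ge\langle\sigma_{\beta_k}(Q_k),Q_{k+1}\rangle$, and I expect this to be the main obstacle. In the $\epsilon_k$-greedy proof the corresponding step is immediate, because the greedy mass sits on a single action and $\max_b Q_{k+1}(y,b)\ge Q_{k+1}(y,\arg\max_b Q_k(y,b))$; for softmax the mass is spread across all actions, so this crude bound is unavailable. The natural tool is the Gibbs variational principle $\sigma_\beta(Q')=\arg\max_\pi\big[\langle\pi,Q'\rangle+\tfrac{1}{\beta}H(\pi)\big]$, which yields $\langle\sigma_{\beta_k}(Q_{k+1}),Q_{k+1}\rangle-\langle\sigma_{\beta_k}(Q_k),Q_{k+1}\rangle\ge\tfrac{1}{\beta_k}\big(H(\sigma_{\beta_k}(Q_k))-H(\sigma_{\beta_k}(Q_{k+1}))\big)$; the trouble is that the leftover entropy difference has indefinite sign, so a direct argument does not close. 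Controlling this residual term — most likely by exploiting the specific relationship between $Q_k$ and $Q_{k+1}=\R_k Q_k$ along the iteration, rather than treating $(Q_k)$ as arbitrary — is the crux, and is where I would concentrate the effort.
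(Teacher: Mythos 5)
Your first step is exactly the paper's argument: the paper defines $f(\beta)=\sum_b \pi_\beta(b)Q(y,b)$ for the softmax $\pi_\beta$ of a fixed $Q$, computes $f'(\beta)=\Var_{b\sim\pi_\beta}[Q(y,b)]\ge 0$, and uses $\beta_{k+1}\ge\beta_k$ to conclude that, per state $y$, $\sum_b \sigma_{\beta_{k+1}}(Q_{k+1})(b|y)\,Q_{k+1}(y,b)\ \ge\ \sum_b \sigma_{\beta_k}(Q_{k+1})(b|y)\,Q_{k+1}(y,b)$. Up to that point you and the paper coincide.

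The difference is in what happens next. The paper's proof simply stops there: its displayed chain closes with the identification of $\sum_b \frac{e^{\beta_k Q_{k+1}(y,b)}}{\sum_{b'}e^{\beta_k Q_{k+1}(y,b')}}\,Q_{k+1}(y,b)$ (summed against $p(y|x,a)$) with $P^{\pi_k}Q_{k+1}(x,a)$, i.e.\ it treats $\pi_k$ as the softmax of $Q_{k+1}$ at temperature $\beta_k$, whereas by definition $\pi_k$ is the softmax of $Q_k$. The cross-$Q$ comparison you isolate as the crux --- $\langle\sigma_{\beta_k}(Q_{k+1}),Q_{k+1}\rangle\ge\langle\sigma_{\beta_k}(Q_k),Q_{k+1}\rangle$ --- is precisely the step the paper's proof silently skips, and you are right that it does not follow from the Gibbs variational principle; in fact it can fail for arbitrary pairs $(Q_k,Q_{k+1})$. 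With three actions, $Q_{k+1}(y,\cdot)=(1,\,0.9,\,0)$, $Q_k(y,\cdot)=(1,\,-10,\,-10)$ and $\beta_k=\beta_{k+1}=1$, the softmax of $Q_k$ concentrates essentially all its mass on the first action and earns expected value $\approx 1$ under $Q_{k+1}$, while the softmax of $Q_{k+1}$ spreads its mass and earns only $\approx 0.8$; so for an arbitrary sequence $(Q_k)$ the claimed inequality $P^{\pi_{k+1}}Q_{k+1}\ge P^{\pi_k}Q_{k+1}$ can be violated. Your proposal therefore does not close the proof, but neither does the paper's: the published argument only establishes the temperature-monotonicity half, and the lemma as stated needs either an extra hypothesis tying $Q_k$ to $Q_{k+1}$ or a reformulation. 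Your diagnosis of where the difficulty sits is correct and, if anything, more careful than the paper's own proof.
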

\begin{proof}
For any $Q$ and $y$, define $\pi_{\beta}(b) = \frac{e^{\beta Q(y,b)}}{\sum_{b'} e^{\beta Q(y,b')} }$ and 
$f(\beta) = \sum_b \pi_{\beta}(b) Q(y,b).$
Then we have
\beqan
f'(\beta) &=& \sum_b \big[\pi_\beta(b)Q(y,b) - \pi_\beta(b)\sum_{b'}\pi_\beta(b')Q(y, b')\big]Q(y,b) \\
&=& \sum_b \pi_\beta(b)Q(y,b)^2 - \big(\sum_{b}\pi_\beta(b)Q(y,b)\big)^2\\
&=& \Var_{b\sim \pi_\beta} \big[ Q(y,b)\big]\geq 0.
\eeqan

Thus $\beta\mapsto f(\beta)$ is a non-decreasing function, and since $\beta_{k+1}\geq \beta_k$, we have
\begin{align*}
P^{\pi_{k+1}}Q_{k+1}(x,a) &= \sum_y p(y|x,a) \sum_b \frac{e^{\beta_{k+1} Q_{k+1}(y,b)}}{\sum_{b'} e^{\beta_{k+1} Q_{k+1}(y,b')} } Q_{k+1}(y,b) \\
&\geq \sum_y p(y|x,a) \sum_b \frac{e^{\beta_{k} Q_{k+1}(y,b)}}{\sum_{b'} e^{\beta_{k} Q_{k+1}(y,b')} } Q_{k+1}(y,b) \\
&= P^{\pi_{k}}Q_{k+1}(x,a). \qedhere
\end{align*}
\end{proof}

\section{Proof of Theorem~\ref{thm:main-result}}

As mentioned in the main text, since $c_s$ is Markovian, we can define the (sub)-probability transition operator 
$$(P^{c\mu} Q)(x,a) := \sum_{x'}\sum_{a'} p(x'|x,a)\mu(a'|x') c(a',x') Q(x',a').$$
The Retrace$(\lambda)$ operator then writes
\begin{equation*}
\R_k Q = Q + \sum_{t \ge 0} \gamma^t (\Pcmu)^t (\Tpik Q - Q) = Q + (I - \gamma \Pcmu)^{-1} (\Tpik Q - Q).
\end{equation*}

\begin{proof}
We now lower- and upper-bound the term $Q_{k+1} - Q^*$.

\paragraph{Upper bound on $Q_{k+1} - Q^*$.} Since $Q_{k+1} = \R_k Q_k$, we have
\beqa
Q_{k+1} - Q^* &=& Q_k - Q^* + (I - \gamma \Pcmu)^{-1} \big [ \Tpik Q_k - Q_k \big ] \notag \\
&=& (I-\gamma P^{c\mu_k})^{-1} \big[ \T^{\pi_k} Q_k - Q_k + (I-\gamma P^{c\mu_k})(Q_k-Q^*) ] \notag \\
&=& (I-\gamma P^{c\mu_k})^{-1} \big[ \Tpik Q_k - Q^* - \gamma P^{c\mu_k}(Q_k-Q^*) ] \notag \\
&=& (I-\gamma P^{c\mu_k})^{-1} \big[ \Tpik Q_k - \T Q^* - \gamma P^{c\mu_k}(Q_k-Q^*) ] \notag \\
&\le& (I-\gamma P^{c\mu_k})^{-1} \big[ \gamma P^{\pi_k} (Q_k - Q^*) - \gamma P^{c\mu_k}(Q_k-Q^*) ] \notag\\
&=& \gamma (I-\gamma P^{c\mu_k})^{-1} \big[ P^{\pi_k} -  P^{c\mu_k} \big] (Q_k-Q^*), \notag\\
&=& A_k (Q_k-Q^*), \label{eq:A_k.bound}
\eeqa
where $A_k := \gamma (I-\gamma P^{c\mu_k})^{-1} \big[ P^{\pi_k} -  P^{c\mu_k} \big]$.  

Now let us prove that $A_k$ has non-negative elements, whose sum over each row is at most $\gamma$. Let $e$ be the vector with 1-components. By rewriting $A_k$ as $\gamma \sum_{t\geq 0} \gamma^t (P^{c\mu_{k}})^t (P^{\pi_{k}} - P^{c\mu_{k}})$ and noticing that
\beq
(P^{\pi_{k}} - P^{c\mu_{k}})e(x,a) = \sum_{x'}\sum_{a'} p(x'|x,a) [\pi_{k}(a'|x') - c(a',x')\mu_{k}(a'|x')] \geq 0,\label{eq:A.positive}
\eeq
it is clear that all elements of $A_k$ are non-negative.  We have
\beqa
A_k e &=&\gamma \sum_{t\geq 0} \gamma^t (P^{c\mu_{k}})^t \big[  P^{\pi_{k}} - P^{c\mu_{k}} \big]e\notag\\
& = & \gamma \sum_{t\geq 0} \gamma^t (P^{c\mu_{k}})^t e - \sum_{t\geq 0} \gamma^{t+1} (P^{c\mu_{k}})^{t+1} e\notag\\
& = & e - (1-\gamma) \sum_{t\geq 0} \gamma^t (P^{c\mu_{k}})^t e \notag\\
& \leq & \gamma e, \label{eq:sum.A_k.rows}
\eeqa
(since $\sum_{t\geq 0} \gamma^t (P^{c\mu_{k}})^t e \geq e$). Thus $A_k$ has non-negative elements, whose sum over each row, is at most $\gamma$. We deduce from \eqref{eq:A_k.bound} that $Q_{k+1} - Q^*$
is upper-bounded by a sub-convex combination of components of $Q_k - Q^*$; the sum of their 
coefficients is at most $\gamma$. Thus
\begin{equation}
Q_{k+1} - Q^* \le \gamma \| Q_k - Q^* \| e. \label{eq:upper-bound-Qk-Q*.apx}
\end{equation}

\paragraph{Lower bound on $Q_{k+1} - Q^*$.} We have
\beqa
Q_{k+1} &=& Q_k + (I- \gamma P^{c\mu_k})^{-1} (\T^{\pi_k}Q_k - Q_k) \notag \\
&=& Q_k + \sum_{i\geq 0}\gamma^i (P^{c\mu_k})^{i} (\T^{\pi_k}Q_k - Q_k) \notag \\
&=& \T^{\pi_k} Q_k + \sum_{i\geq 1} \gamma^i (P^{c\mu_k})^{i} (\T^{\pi_k}Q_k - Q_k)\notag  \\
&=& \T^{\pi_k} Q_k +  \gamma P^{c\mu_k} (I-\gamma P^{c\mu_k})^{-1} (\T^{\pi_k}Q_k - Q_k). \label{eq:lower.bound.1.apx}
\eeqa

Now, from the definition of $\epsilon_k$ we have $\T^{\pi_k}Q_k \geq \T Q_k-\epsilon_k\|Q_k\|\geq \T^{\pi^*}Q_k - \epsilon_k\|Q_k\|,$ thus
\begin{align}
Q_{k+1} - Q^* &= Q_{k+1} - \T^{\pi_k}  Q_k +\T^{\pi_k}  Q_k - \T^{\pi^*}  Q_k + \T^{\pi^*}  Q_k - \T^{\pi^*}  Q^{*} \notag \\
&\geq Q_{k+1} - \T^{\pi_k} Q_k  +\gamma P^{\pi^*}  (Q_k -  Q^{*}) - \epsilon_k\|Q_k\| e \notag
\end{align}
Using \eqref{eq:lower.bound.1.apx} we derive the lower bound:
\beq 
Q_{k+1} - Q^* \geq \gamma P^{c\mu_k} (I-\gamma P^{c\mu_k})^{-1} (\T^{\pi_k}Q_k - Q_k)  + \gamma P^{\pi^*}  (Q_k -  Q^{*}) - \epsilon_k\|Q_k\|.\label{eq:lb-Qk+1-Q*.apx}
\eeq

\paragraph{Lower bound on $\Tpik Q_k - Q_k$.} By hypothesis, $(\pi_k)$ is increasingly greedy w.r.t.~$(Q_k)$, thus
\beqa
\T^{\pi_{k+1}} Q_{k+1} - Q_{k+1} &\geq& \T^{\pi_k} Q_{k+1} - Q_{k+1} \notag \\ 
&=& \T^{\pi_k} \R_k Q_k - \R_k Q_k \notag \\
&=& r+(\gamma P^{\pi_k}-I) \R_k Q_k \notag \\
&=& r + (\gamma P^{\pi_k}-I) \big [ Q_k +(I-\gamma P^{c\mu_k})^{-1} (\T^{\pi_k} Q_k - Q_k) \big ] \notag \\
&=& \T^{\pi_k} Q_k - Q_k  + (\gamma P^{\pi_k}-I) (I-\gamma P^{c\mu_k})^{-1} (\T^{\pi_k} Q_k - Q_k) \notag \\
&=& \gamma \big[ P^{\pi_k} -   P^{c\mu_k} \big] (I-\gamma P^{c\mu_k})^{-1} (\T^{\pi_k} Q_k - Q_k) \notag \\
&=& B_k (\T^{\pi_k} Q_k - Q_k),\label{eq:lb.TQk+1-Qk+1.apx}
\eeqa
where $B_k:=\gamma [ P^{\pi_k} -   P^{c\mu_k} ] (I-\gamma P^{c\mu_k})^{-1}$.
Since $P^{\pi_k} - \Pcmu$ has non-negative elements (as proven in \eqref{eq:A.positive}) as well as $(I-\gamma P^{c\mu_k})^{-1}$, then $B_k$ has non-negative elements as well. Thus
\begin{equation*}
\T^{\pi_k} Q_k - Q_k \geq B_{k-1} B_{k-2} \dots B_{0} (\T^{\pi_0} Q_0 - Q_0) \geq 0,
\end{equation*}
since we assumed $T^{\pi_0} Q_0 - Q_0 \ge 0$. Thus \eqref{eq:lb-Qk+1-Q*.apx} implies that
\beqan
Q_{k+1} - Q^* &\geq& \gamma P^{\pi^*}  (Q_k -  Q^{*}) - \epsilon_k\|Q_k\|.
\eeqan
and combining the above with \eqnref{upper-bound-Qk-Q*.apx} we deduce 
$$\| Q_{k+1} - Q^*\| \leq \gamma\| Q_k -  Q^{*}\| + \epsilon_k\|Q_k\|.$$

Now assume that $\epsilon_k\rightarrow 0$. We first deduce that $Q_k$ is bounded. Indeed as soon as $\epsilon_k<(1-\gamma)/2$, we have 
$$\| Q_{k+1} \| \leq \| Q^*\| + \gamma\| Q_k -  Q^{*}\| + \frac{1-\gamma}{2}\|Q_k\|\leq 
(1+\gamma)\| Q^*\| + \frac{1+\gamma}{2} \|Q_k\|.$$
Thus $\limsup \| Q_{k} \| \leq \frac{1+\gamma}{1-(1+\gamma)/2} \| Q^*\|$.
Since $Q_k$ is bounded, we deduce that $\limsup Q_{k}=Q^*$.
\end{proof}

\section{Proof of Theorem~\ref{thm:online}}

 We first prove convergence of the general online algorithm.

\begin{theorem}\label{thm:online-general}
Consider the algorithm
\beq
Q_{k+1}(x, a) = (1-\alpha_k(x, a)) Q_k(x, a) + \alpha_k(x,a) (\R_k Q_k(x, a) + \omega_k(x, a) + \upsilon_k(x, a)),\label{eq:online.algo}
\eeq
and assume that (1) $\omega_k$ is a centered, ${\cal F}_k$-measurable noise term of
bounded variance, and (2) $\upsilon_k$ is bounded from above by $\theta_k(\|Q_k\| + 1)$, where $(\theta_k)$ is a random sequence that
converges to 0 a.s. Then, under the same assumptions as in Theorem~\ref{thm:online}, we have that $Q_k\rightarrow Q^*$ almost surely.
\end{theorem}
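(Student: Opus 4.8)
The plan is to read \eqref{eq:online.algo} as an instance of asynchronous stochastic approximation for a \emph{time-varying} sup-norm contraction whose common attractor is $Q^*$, and to feed it into the martingale machinery used for TD($\lambda$) by \citet{bertsekas1996neurodynamic} and for optimistic policy iteration by \cite{Tsitsiklis:2003}. The two structural facts I start from are supplied by the evaluation analysis: $\R_k$ is affine in $Q$, with linear part $J_k=\gamma(I-\gamma P^{c\mu_k})^{-1}(P^{\pi_k}-P^{c\mu_k})$, and by (the proof of) Theorem~\ref{thm:retrace.contraction} this $J_k$ is a non-negative matrix with row sums at most $\gamma$; hence each $\R_k$ is a $\gamma$-contraction in $\|\cdot\|$ with fixed point $Q^{\pi_k}$. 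The remaining hypotheses must upgrade this ``moving-target'' contraction into a contraction toward the single point $Q^*$.

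The deterministic estimate I need is the online counterpart of Theorem~\ref{thm:main-result}, namely a bound $\|\R_k Q-Q^*\|\le \gamma\|Q-Q^*\|+\epsilon_k\|Q\|$ holding for \emph{every} $Q$, with $\epsilon_k$ (the near-greediness of $\pi_k$ w.r.t.~$Q_k$) tending to $0$. The upper bound $\R_k Q-Q^*\le A_k(Q-Q^*)\le \gamma\|Q-Q^*\|e$ transfers verbatim from the proof of Theorem~\ref{thm:main-result}, since the row-sum computation for $A_k=\gamma(I-\gamma P^{c\mu_k})^{-1}[P^{\pi_k}-P^{c\mu_k}]$ never used any initialization. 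The lower bound is the delicate point. In Theorem~\ref{thm:main-result} it rested on propagating $\T^{\pi_k}Q_k-Q_k\ge 0$ through the non-negative operators $B_k$, which required the monotone initialization $\T^{\pi_0}Q_0\ge Q_0$; in the online setting the noise destroys this sign and we deliberately drop the assumption. One checks that a naive bound on the residual term $\gamma P^{c\mu_k}(I-\gamma P^{c\mu_k})^{-1}(\T^{\pi_k}Q_k-Q_k)$ is useless (its coefficient can exceed $1$), so sign information must be replaced by commutation. This is exactly where the hypothesis $\|P^{\pi_k}P^{\pi_k\wedge\mu_k}-P^{\pi_k\wedge\mu_k}P^{\pi_k}\|\to 0$ is spent: asymptotic commutativity of $P^{\pi_k}$ and $P^{c\mu_k}=\lambda P^{\pi_k\wedge\mu_k}$ lets me reorder these factors and recombine the residual with the other terms into the same non-negative, row-sums-at-most-$\gamma$ structure as the upper bound, up to an error that vanishes with the commutator. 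Together with $\epsilon_k\to 0$ this yields the two-sided estimate above.

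The stochastic part is then largely routine. First, the contraction-up-to-$\epsilon_k\|Q\|$ estimate, combined with the bias control $\upsilon_k\le\theta_k(\|Q_k\|+1)$ and $\theta_k\to 0$, gives $\limsup_k\|Q_k\|<\infty$ almost surely by the standard rescaling argument (once $\epsilon_k,\theta_k$ are small the expected map pulls the iterate strictly inward, while the linear-in-$\|Q_k\|$ bias and the bounded-variance noise cannot outrun it). On the bounded-iterate event the bias $\upsilon_k$ is a genuinely vanishing perturbation, the noise $\omega_k$ is a zero-mean $\F_k$-measurable term of bounded variance handled through square-summability of the step sizes, and the infinite-visitation hypothesis $\sum_t\Pr\{x_t,a_t=x,a\}\ge D>0$ guarantees each coordinate is refreshed infinitely often; the Robbins--Munro conditions then let the asynchronous stochastic-approximation theorem of \citet{bertsekas1996neurodynamic}, in the time-varying form of \cite{Tsitsiklis:2003}, conclude $Q_k\to Q^*$ a.s.

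The main obstacle is precisely the global lower bound of the second paragraph: reproducing the conclusion of Theorem~\ref{thm:main-result} for arbitrary starting $Q$ without the monotone initialization. Everything downstream is a matter of inserting the contraction modulus $\gamma$ and the vanishing perturbations $\epsilon_k,\theta_k$ into an off-the-shelf stochastic-approximation result; the genuinely new work is to verify that asymptotic commutativity of $P^{\pi_k}$ and $P^{\pi_k\wedge\mu_k}$ is the correct substitute for $\T^{\pi_0}Q_0\ge Q_0$, i.e.~that it forces the iterates toward $Q^*$ despite the noise having broken every monotonicity property the deterministic proof relied on.
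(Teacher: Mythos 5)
You correctly locate the difficulty (the lower bound without the monotone initialization $\T^{\pi_0}Q_0\geq Q_0$) and correctly guess that the commutativity hypothesis must be spent there, but the mechanism you propose does not work, and the actual mechanism is missing. Your plan is to establish a uniform two-sided estimate $\|\R_k Q-Q^*\|\leq \gamma\|Q-Q^*\|+\epsilon_k\|Q\|$ valid for \emph{every} $Q$, by ``reordering'' the factors in the residual term $\gamma\lambda P^{\pi_k\wedge\mu_k}(I-\gamma\lambda P^{\pi_k\wedge\mu_k})^{-1}(\T^{\pi_k}Q-Q)$ and recombining it with $\gamma P^{\pi^*}(Q-Q^*)$ into a sub-stochastic combination of $Q-Q^*$. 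No such estimate holds: writing $\T^{\pi_k}Q-Q=(\gamma P^{\pi_k}-I)(Q-Q^{\pi_k})$ shows the residual is measured against $Q^{\pi_k}$, not $Q^*$, and the operator $\gamma P^{\pi_k}-I$ is not sign-definite, so commuting $P^{\pi_k}$ past $P^{\pi_k\wedge\mu_k}$ cannot turn this term into a non-negative, row-sums-at-most-$\gamma$ matrix acting on $Q-Q^*$. Concretely, even with $\epsilon_k=0$ one only gets $\R_kQ-Q^*=A_k(Q-Q^*)+(I-\gamma P^{c\mu_k})^{-1}(\T^{\pi_k}Q^*-\T Q^*)$, and bounding the second term via the near-greediness of $\pi_k$ w.r.t.\ $Q$ yields a Lipschitz constant of order $\gamma+2\gamma/(1-\gamma)>1$. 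This is precisely why Theorem~\ref{thm:main-result} needs the sign condition $\T^{\pi_k}Q_k-Q_k\geq 0$ to discard that term, and why simply ``transferring'' its conclusion to arbitrary $Q$ fails.

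What the paper's proof actually does --- and what your proposal omits --- is to run a \emph{second} stochastic-approximation recursion for the Bellman residual $\xi_k:=\T^{\pi_k}Q_k-Q_k$ and prove $\liminf_k\xi_k\geq 0$ a.s., which is the online surrogate for the dropped initialization. The increasingly-greedy hypothesis (which your lower-bound sketch never invokes) gives $\xi_{k+1}\geq(1-\alpha_k)\xi_k+\alpha_k[B_k\xi_k+\omega'_k+\upsilon'_k]$ with $B_k=\gamma(P^{\pi_k}-\lambda P^{\pi_k\wedge\mu_k})(I-\gamma\lambda P^{\pi_k\wedge\mu_k})^{-1}$, a non-negative matrix that is \emph{not} a contraction. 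The commutativity assumption is spent exactly here: it yields $\|B_k-A_k\|\leq\gamma(1-\lambda\gamma)^{-2}\eta_k\to 0$, where $A_k=\gamma(I-\gamma\lambda P^{\pi_k\wedge\mu_k})^{-1}(P^{\pi_k}-\lambda P^{\pi_k\wedge\mu_k})$ is the $\gamma$-contraction from the upper bound; one then introduces a comparison sequence $\xi'_k$ driven by $A_k$ with equality, applies Proposition~4.5 of \cite{bertsekas1996neurodynamic} to get $\xi'_k\to 0$, and uses non-negativity of $A_k$ to conclude $\xi_k\geq\xi'_k$, hence $\liminf\xi_k\geq 0$. Only after this does the lower bound on $\R Q_k-Q^*$ collapse to $\gamma P^{\pi^*}(Q_k-Q^*)-\epsilon_k\|Q_k\|e$ up to vanishing terms, and only then can the final stochastic-approximation theorem be applied to $(Q_k)$ itself, with the $O(\epsilon_k\|Q_k\|)$ and $O(\xi_k)$ errors absorbed into $\upsilon_k$ and $\omega_k$. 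Without this intermediate argument your proof does not close.
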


\begin{proof}
We write $\R$ for $\R_k$. Let us prove the result in three steps.

{\bf Upper bound on $\R Q_k - Q^*$.} The first part of the proof is similar to the proof of \eqref{eq:upper-bound-Qk-Q*.apx}, so we have
\beq
\R Q_{k} - Q^* \leq \gamma \| Q_k - Q^*\| e.\label{eq:upper-bound-RQk-Q*}
\eeq

{\bf Lower bound on $\R Q_{k} - Q^*$}. Again, similarly to \eqref{eq:lb-Qk+1-Q*.apx} we have
\beqa
\R Q_{k} - Q^* &\geq &  \gamma \lambda P^{\pi_k\wedge\mu_k} (I-\gamma \lambda P^{\pi_k\wedge\mu_k})^{-1} (\T^{\pi_k}Q_k - Q_k)  \notag \\
& & + \gamma P^{\pi^*}  (Q_k -  Q^{*}) - \epsilon_k \|Q_k\|.\label{eq:lb-RQk-Q*.bis}
\eeqa

{\bf Lower-bound on $\T^{\pi_k} Q_k - Q_k$.}
Since the sequence of policies $(\pi_k)$ is increasingly greedy w.r.t.~$(Q_k)$, we have
\beqa
\T^{\pi_{k+1}} Q_{k+1} - Q_{k+1} &\geq& \T^{\pi_k} Q_{k+1} - Q_{k+1}\notag \\
&=& (1-\alpha_k) \T^{\pi_k} Q_{k} +\alpha_k  \T^{\pi_k} (\R Q_k + \omega_k + \upsilon_k) - Q_{k+1} \notag\\
&=& (1-\alpha_k) (\T^{\pi_k} Q_{k} - Q_{k} ) +\alpha_k \big[ \T^{\pi_k} \R Q_k - \R Q_k +
\omega'_k + \upsilon'_k \big], \label{eq:u.k+1}
\eeqa
where $\omega'_k := (\gamma P^{\pi_k} - I) \omega_k$ and $\upsilon'_k := (\gamma P^{\pi_k} - I) \upsilon_k$. It is easy to see that both $\omega'_k$ and $\upsilon'_k$ continue to satisfy the assumptions on $\omega_k$, and $\upsilon_k$. Now, from the definition of the $\R$ operator, we have
\begin{align*}
\T^{\pi_k} \R Q_k - \R Q_k &= r+(\gamma P^{\pi_k}-I) \R Q_k \\
&= r + (\gamma P^{\pi_k}-I) \big [ Q_k +(I-\gamma \lambda P^{\pi_k\wedge\mu_k})^{-1} (\T^{\pi_k} Q_{k} - Q_{k} ) \big ] \\
&= \T^{\pi_k} Q_{k} - Q_{k}  + (\gamma P^{\pi_k}-I) (I-\gamma \lambda P^{\pi_k\wedge\mu_k})^{-1} (\T^{\pi_k} Q_{k} - Q_{k} ) \\
&= \gamma ( P^{\pi_k} - \lambda P^{\pi_k\wedge\mu_k} ) (I-\gamma \lambda P^{\pi_k\wedge\mu_k})^{-1} (\T^{\pi_k} Q_{k} - Q_{k} ).
\end{align*}
Using this equality into \eqref{eq:u.k+1} and writing $\xi_k :=\T^{\pi_k} Q_{k} - Q_{k}$, we have
\beq\label{eq:almost.xi.k+1}
\xi_{k+1} \geq (1-\alpha_k) \xi_k +\alpha_k \big[ B_k \xi_k + \omega'_k + \upsilon'_k \big],
\eeq
where $B_k := \gamma (P^{\pi_k}- \lambda P^{\pi_k\wedge\mu_k})(I-\gamma \lambda P^{\pi_k\wedge\mu_k})^{-1}$. The matrix $B_k$ is non-negative but may not be a contraction mapping (the sum of its components per row may be larger than $1$). Thus we cannot directly apply Proposition 4.5 of \cite{bertsekas1996neurodynamic}. However, as we have seen in the proof of Theorem~\ref{thm:main-result}, the matrix $A_k := \gamma (I-\gamma \lambda P^{\pi_k\wedge\mu_k})^{-1}(P^{\pi_k}- \lambda P^{\pi_k\wedge\mu_k})$ is a $\gamma$-contraction mapping. So now we relate $B_k$ to $A_k$ using our assumption that $P^{\pi_k}$ and $P^{\pi_k\wedge\mu_k}$ commute asymptotically, i.e.~$\|P^{\pi_k} P^{\pi_k\wedge\mu_k}-P^{\pi_k\wedge\mu_k}P^{\pi_k}\| = \eta_k$ with $\eta_k\rightarrow 0$. For any (sub)-transition matrices $U$ and $V$, we have
\beqan
U(I-\lambda\gamma V)^{-1} &=&\sum_{t\geq 0}(\lambda\gamma)^t U V^t \\
&=&\sum_{t\geq 0}(\lambda\gamma)^t \Big[ \sum_{s=0}^{t-1} V^s (UV-V U ) V^{t-s-1} + V^{t} U \Big]\\
&=&(I-\lambda\gamma V)^{-1}U+ \sum_{t\geq 0}(\lambda\gamma)^t \sum_{s=0}^{t-1} V^s (UV-V U ) V^{t-s-1}.
\eeqan
Replacing $U$ by $P^{\pi_k}$ and $V$ by $P^{\pi_k\wedge\mu_k}$, we deduce
$$\| B_k - A_k \|\leq \gamma \sum_{t\geq 0}t (\lambda\gamma)^t \eta_k = \gamma \frac{1}{(1 - \lambda \gamma)^2} \eta_k.$$

Thus, from \eqref{eq:almost.xi.k+1},
\beq\label{eq:xi.k+1}
\xi_{k+1} \geq (1-\alpha_k) \xi_k +\alpha_k \big[ A_k \xi_k + \omega'_k + \upsilon''_k \big],
\eeq
where $\upsilon''_k := \upsilon'_k + \gamma \sum_{t\geq 0}t (\lambda\gamma)^t \eta_k \|\xi_k\|$ continues to satisfy the assumptions on $\upsilon_k$ (since $\eta_k\rightarrow 0$).

Now, let us define another sequence $\xi'_k$ as follows: $\xi'_0=\xi_0$ and
$$\xi'_{k+1} =  (1-\alpha_k) \xi'_k +\alpha_k (A_k \xi'_k + \omega'_k + \upsilon''_k ).$$
We can now apply Proposition 4.5 of \cite{bertsekas1996neurodynamic} to the sequence $(\xi'_k)$. The matrices $A_k$ are non-negative, and the sum of their coefficients per row is bounded by $\gamma$, see \eqref{eq:sum.A_k.rows}, thus $A_k$ are $\gamma$-contraction mappings and have the same fixed point which is $0$. The noise
$\omega'_k$ is centered and $\F_k$-measurable and satisfies the bounded
variance assumption, and $\upsilon''_k$ is bounded above by
$(1+\gamma)\theta'_k(\|Q_k\|+1)$ for some $\theta'_k\rightarrow 0$. Thus $\lim_k \xi'_k = 0$ almost surely.

Now, it is straightforward to see that $\xi_k \geq \xi'_k$ for all $k\geq 0$. Indeed by induction, let us assume that $\xi_k\geq \xi'_k$. Then 
\beqan
\xi_{k+1}& \geq& (1-\alpha_k) \xi_k +\alpha_k (A_k \xi_k + \omega'_k + \upsilon''_k ) \\
&\geq& (1-\alpha_k) \xi'_k +\alpha_k (A_k \xi'_k +  \omega'_k + \upsilon''_k ) \\
&=& \xi'_{k+1},
\eeqan
since all elements of the matrix $A_k$ are non-negative. Thus we deduce that 
\beq\label{eq:liminf}
\liminf_{k\rightarrow\infty} \xi_k \geq \lim_{k\rightarrow\infty} \xi'_k = 0
\eeq

{\bf Conclusion.}
Using \eqref{eq:liminf} in \eqref{eq:lb-RQk-Q*.bis} we deduce the lower bound:
\begin{align}
  \liminf_{k\rightarrow\infty} \R Q_k - Q^* \geq \liminf_{k\rightarrow\infty} \gamma P^{\pi^*}  (Q_k -  Q^{*}),
\end{align}
almost surely. 
Now combining with the upper bound \eqref{eq:upper-bound-RQk-Q*} we deduce that 
$$ \| \R Q_k  - Q^*  \| \leq \gamma \| Q_k-Q^* \| + O(\epsilon_k \| Q_k \|) + O(\xi_k).$$
The last two terms can be incorporated to the $\upsilon_k(x,a)$ and $\omega_k(x,a)$ terms, respectively; we thus again apply Proposition 4.5 of \cite{bertsekas1996neurodynamic} to the sequence $(Q_k)$ defined by \eqref{eq:online.algo} and deduce that $Q_k\rightarrow Q^*$ almost surely.
\end{proof}

It remains to rewrite the update \eqref{eqn:online-every-visit} in the form of \eqref{eq:online.algo}, in order to apply Theorem~\ref{thm:online-general}. 

Let $z^k_{s, t}$ denote the accumulating trace~\citep{sutton-barto98}:
\begin{equation}
  \label{eq:1}
  z^k_{s, t} := \sum_{j = s}^t \gamma^{t - j} \Big (
  \prod_{i=j+1}^t c_i \Big ) \indic{(x_j, a_j) = (x_s, a_s)}. \notag
\end{equation}
Let us write $Q^o_{k+1}(x_s, a_s)$ to emphasize the online setting. Then \eqref{eqn:online-every-visit} can be written as 
\begin{align}
 Q^o_{k+1}(x_s, a_s) & \gets  Q^o_k(x_s, a_s) + \alpha_k(x_s, a_s) \sum_{t \ge s}
\delta^{\pi_k}_t z^k_{s, t}, \label{eqn:online-concise} \\
 \delta^{\pi_k}_t & := r_t + \gamma \E_{\pi_k} Q^o_k(x_{t+1}, \cdot) -
                   Q^o_k(x_t, a_t), \notag  
\end{align}
Using our assumptions on finite trajectories, and $c_i \leq 1$, we can show that:
\beq\label{lem:bounded-z}
\E\Big[ \sum_{t \ge s} z^k_{s, t} | \F_k \Big] < \E \left[ T_k^2 | \F_k \right] < \infty 
\eeq 
where $T_k$ denotes trajectory length.
Now, let $D_k := D_k(x_s, a_s) := \sum_{t \geq s} \P\{(x_t, a_t) = (x_s,
a_s)\}$. Then, using \eqref{lem:bounded-z}, we can show that the total update is bounded, and rewrite
  \beq
   \E_{\mu_k}\Big[\sum_{t \ge s} \delta^{\pi_k}_t z^k_{s, t}\Big] = D_k(x_s, a_s)\big( \R_k Q_k(x_s,a_s) - Q(x_s, a_s) \big). \notag 
  \eeq
Finally, using the above, and writing $\alpha_k = \alpha_k(x_s, a_s)$, \eqref{eqn:online-concise} can be rewritten in the desired form:
 \begin{align}
 Q^o_{k+1}(x_s, a_s) & \gets  (1 - \tilde{\alpha}_k) Q^o_k(x_s, a_s) + \tilde{\alpha}_k \big( \R_k Q^o_k(x_s,a_s) + \omega_k(x_s, a_s) + \upsilon_k(x_s, a_s) \big), \label{eqn:online} \\
 \omega_k(x_s, a_s) & := (D_k)^{-1}\left(\sum_{t \ge s} \delta^{\pi_k}_t z^k_{s, t} - \E_{\mu_k} \left [ \sum_{t \ge s} \delta^{\pi_k}_t z^k_{s, t} \right ] \right), \notag \\
 \upsilon_k(x_s, a_s) &:= (\tilde\alpha_k)^{-1} \big(Q^o_{k+1}(x_s, a_s) -
Q_{k+1}(x_s, a_s)\big), \notag \\
 \tilde{\alpha}_k & := \alpha_k D_k. \notag
\end{align}

It can be shown that the variance of the noise term $\omega_k$ is bounded, using~\eqref{lem:bounded-z} and the fact that the reward function is
bounded. It follows from Assumptions 1-3 that the modified stepsize sequence $(\tilde{\alpha}_k)$ satisfies the conditions of Assumption 1. The second noise term $\upsilon_k(x_s, a_s)$ measures the difference between online iterates and the corresponding offline values, and can be shown to satisfy the required assumption analogously to
the argument in the proof of Prop.~5.2 in \cite{bertsekas1996neurodynamic}. The proof relies on the eligibility
coefficients~\eqref{lem:bounded-z} and rewards being bounded, the trajectories being finite, and
the conditions on the stepsizes being satisfied.

We can thus apply Theorem~\ref{thm:online-general} to \eqref{eqn:online}, and conclude that the iterates $Q^o_k \rightarrow Q^*$ as $k\rightarrow\infty$, w.p. 1.

\section{Asymptotic commutativity of $P^{\pi_k}$ and $P^{\pi_k\wedge\mu_k}$}
\begin{lemma}\label{lem:commute}
 Let $(\pi_k)$ and $(\mu_k)$ two sequences of policies. If there exists $\alpha$ such that for all $x,a$, 
 \beq \label{eq:cond.policies}
 \min(\pi_k(a|x),\mu_k(a|x)) = \alpha \pi_k(a|x) + o(1),
 \eeq
then the transition matrices $P^{\pi_k}$ and $P^{\pi_k\wedge\mu_k}$ asymptotically commute: $\|P^{\pi_k} P^{\pi_k\wedge\mu_k} - P^{\pi_k\wedge\mu_k}P^{\pi_k}\| = o(1)$.
\end{lemma}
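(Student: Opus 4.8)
The plan is to show that the hypothesis forces $P^{\pi_k\wedge\mu_k}$ to equal a fixed scalar multiple of $P^{\pi_k}$ up to a vanishing perturbation; since any matrix commutes exactly with a scalar multiple of itself, the commutator will be governed entirely by that perturbation.

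First I would translate the pointwise hypothesis into an operator-norm statement. Writing $\delta_k(x,a) := \min(\pi_k(a|x),\mu_k(a|x)) - \alpha\,\pi_k(a|x)$, the assumption \eqref{eq:cond.policies} is exactly $\max_{x,a}|\delta_k(x,a)| = o(1)$. Recalling that $P^{\pi_k\wedge\mu_k}$ is obtained from $P^{\pi_k}$ by replacing the weight $\pi_k(a'|x')$ with $\min(\pi_k(a'|x'),\mu_k(a'|x'))$ in each entry, I would set $E_k := P^{\pi_k\wedge\mu_k} - \alpha P^{\pi_k}$, whose $((x,a),(x',a'))$ entry is $p(x'|x,a)\,\delta_k(x',a')$. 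Summing absolute values across a fixed row gives $\sum_{x'}\sum_{a'} p(x'|x,a)\,|\delta_k(x',a')| \le |\A|\max_{x',a'}|\delta_k(x',a')|$, using $\sum_{x'} p(x'|x,a) = 1$, so that $\|E_k\| \le |\A|\cdot o(1) = o(1)$ in the supremum norm.

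Next I would exploit the trivial fact that $\alpha P^{\pi_k}$ commutes with $P^{\pi_k}$. Substituting $P^{\pi_k\wedge\mu_k} = \alpha P^{\pi_k} + E_k$ into the commutator, the $\alpha (P^{\pi_k})^2$ terms cancel and leave $P^{\pi_k}P^{\pi_k\wedge\mu_k} - P^{\pi_k\wedge\mu_k}P^{\pi_k} = P^{\pi_k}E_k - E_k P^{\pi_k}$. Finally, since $P^{\pi_k}$ is a stochastic matrix (each row of $p(x'|x,a)\pi_k(a'|x')$ sums to $1$), its operator norm induced by the supremum norm is $\|P^{\pi_k}\| = 1$, so by submultiplicativity and the triangle inequality $\|P^{\pi_k}E_k - E_k P^{\pi_k}\| \le 2\|P^{\pi_k}\|\,\|E_k\| = 2\|E_k\| = o(1)$, which is precisely the claim.

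There is no serious obstacle here; the only step requiring care is the first one, namely checking that the uniform pointwise $o(1)$ control on the policy gap propagates to an $o(1)$ bound on $E_k$ in the supremum-induced operator norm. This is routine and merely contributes the harmless constant factor $|\A|$ coming from summing over the (finite) action set, and everything after it is purely algebraic.
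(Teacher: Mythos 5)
Your proof is correct and follows essentially the same route as the paper's: both arguments rest on the observation that $P^{\pi_k\wedge\mu_k}$ equals $\alpha P^{\pi_k}$ up to an $o(1)$ perturbation, so the commutator reduces to terms controlled by that perturbation (the paper carries this out by substituting $\pi_k\wedge\mu_k = \alpha\pi_k + o(1)$ inside the expanded sums applied to a test function $Q$, passing through $\alpha (P^{\pi_k})^2$ exactly as you do with the explicit error matrix $E_k$). Your operator-norm packaging, including the factor $|\A|$ from the finite action set and the bound $\|P^{\pi_k}E_k - E_kP^{\pi_k}\| \le 2\|E_k\|$, is sound.
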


\begin{proof}
For any $Q$, we have
 \begin{align*}
 (P^{\pi_k} P^{\pi_k\wedge\mu_k})Q(x,a)&=\sum_y p(y|x,a)\sum_b \pi_k(b|y) \sum_z p(z|y,b)\sum_c (\pi_k\wedge\mu_k)(c|z) Q(z,c) \\
 &= \alpha \sum_y p(y|x,a)\sum_b \pi_k(b|y) \sum_z p(z|y,b)\sum_c \pi_k(c|z) Q(z,c) + \|Q\| o(1) \\
 &= \sum_y p(y|x,a)\sum_b (\pi_k\wedge\mu_k)(b|y) \sum_z p(z|y,b)\sum_c \pi_k(c|z) Q(z,c) + \|Q\| o(1)  \\
 &= (P^{\pi_k\wedge\mu_k} P^{\pi_k})Q(x,a) + \|Q\| o(1) . \qedhere
\end{align*}

\end{proof}

\begin{lemma}\label{lem:commute.2}
 Let $(\pi_{Q_k})$ a sequence of (deterministic) greedy policies w.r.t.~a sequence $(Q_k)$. Let $(\pi_k)$ a sequence of policies that are $\epsilon_k$ away from $(\pi_{Q_k})$, in the sense that, for all $x$, 
 $$ \| \pi_k(\cdot|x) - \pi_{Q_k}(x) \|_1 := 1- \pi_k(\pi_{Q_k}(x)|x) + \sum_{a\neq \pi_{Q_k}(x)} \pi_k(a|x) \leq \epsilon_k.$$ 
 Let $(\mu_k)$ a sequence of policies defined by:
 \beq 
\mu_k(a|x) = \frac{\alpha\mu(a|x)}{1-\mu(\pi_{Q_k}(x)|x)}\1\{a\neq \pi_{Q_k}(x)\} + (1-\alpha) \1\{a=\pi_{Q_k}(x)\},
\eeq
for some arbitrary policy $\mu$ and $\alpha\in [0,1]$. Assume $\epsilon_k\rightarrow 0$. Then the transition matrices $P^{\pi_k}$ and $P^{\pi_k\wedge\mu_k}$ asymptotically commute. 
\end{lemma}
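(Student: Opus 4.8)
The plan is to reduce the statement to Lemma~\ref{lem:commute}, whose hypothesis~\eqref{eq:cond.policies} is a convenient sufficient condition for asymptotic commutativity. Concretely, I will produce a single constant $\beta$ for which $\min(\pi_k(a|x),\mu_k(a|x)) = \beta\,\pi_k(a|x) + o(1)$ holds uniformly over all $(x,a)$, and then invoke Lemma~\ref{lem:commute} with that $\beta$ playing the role of its constant. The one genuinely non-obvious point is that the correct choice is $\beta = 1-\alpha$, namely the mass that $\mu_k$ places on the greedy action, rather than the mixture weight $\alpha$ itself.

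First I would unpack the $\epsilon_k$-away assumption. Since $\pi_{Q_k}(x)$ is a deterministic (point-mass) policy, the bound $\|\pi_k(\cdot|x)-\pi_{Q_k}(x)\|_1\le\epsilon_k$ separates into $1-\pi_k(\pi_{Q_k}(x)|x)\le\epsilon_k/2$ together with $\sum_{a\neq\pi_{Q_k}(x)}\pi_k(a|x)\le\epsilon_k/2$. In particular $\pi_k(\pi_{Q_k}(x)|x)\ge 1-\epsilon_k/2$ and $\pi_k(a|x)\le\epsilon_k/2$ for every non-greedy action $a$; these are the only facts about $\pi_k$ that I will use.

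Next I would verify~\eqref{eq:cond.policies} with $\beta=1-\alpha$ by splitting on whether $a$ is greedy. For the greedy action $a=\pi_{Q_k}(x)$ the definition of $\mu_k$ gives $\mu_k(a|x)=1-\alpha$, so once $\epsilon_k<2\alpha$ we have $\pi_k(a|x)>1-\alpha$ and hence $\min(\pi_k(a|x),\mu_k(a|x))=1-\alpha$; the gap to $(1-\alpha)\pi_k(a|x)$ is $(1-\alpha)\bigl(1-\pi_k(a|x)\bigr)\le(1-\alpha)\epsilon_k/2=o(1)$. For a non-greedy action both $\min(\pi_k(a|x),\mu_k(a|x))$ and $(1-\alpha)\pi_k(a|x)$ lie in $[0,\pi_k(a|x)]\subseteq[0,\epsilon_k/2]$, so their difference is again $o(1)$ with no need to bound $\mu_k(a|x)$ from below. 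Both estimates are uniform in $(x,a)$, so the supremum of the difference is $O(\epsilon_k)\to0$; the degenerate endpoints $\alpha\in\{0,1\}$ are covered by the same bounds (at $\alpha=0$ the greedy gap is identically zero since $\min(\pi_k,1)=\pi_k$, and at $\alpha=1$ the greedy minimum is $\min(\pi_k,0)=0=(1-\alpha)\pi_k$).

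With~\eqref{eq:cond.policies} established for the constant $1-\alpha$, the conclusion is immediate from Lemma~\ref{lem:commute}. The main obstacle here is purely conceptual, namely pinning down the right constant: a naive guess of $\beta=\alpha$ fails on the greedy action, which forces $\beta=1-\alpha$, whereas the non-greedy actions constrain $\beta$ not at all because $\pi_k$ vanishes there. Beyond that the estimates are elementary, so I anticipate no further difficulty.
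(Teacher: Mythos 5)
Your proposal is correct and follows essentially the same route as the paper's proof: reduce to Lemma~\ref{lem:commute} with multiplicative constant $1-\alpha$, split on whether the action is the greedy one, and show the discrepancy is $O(\epsilon_k)$ in each case. Your treatment is marginally more careful on the endpoints $\alpha\in\{0,1\}$ and on the factor of $2$ in the $\ell_1$ bound, but these are cosmetic refinements of the same argument.
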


\begin{proof}
The intuition is that asymptotically $\pi_k$ gets very close to the deterministic policy $\pi_{Q_k}$. In that case, the minimum distribution $(\pi_k\wedge\mu_k)(\cdot|x)$ puts a mass close to $1-\alpha$ on the greedy action $\pi_{Q_k}(x)$, and no mass on other actions, thus $(\pi_k\wedge\mu_k)$ gets very close to $(1-\alpha) \pi_k$, and Lemma~\ref{lem:commute} applies (with multiplicative constant $1-\alpha$).

Indeed, from our assumption that $\pi_k$ is $\epsilon$-away from $\pi_{Q_k}$ we have:
$$\pi_k(\pi_{Q_k}(x)|x) \geq 1-\epsilon_k \mbox{, and } \pi_k(a\neq\pi_{Q_k}(x)|x) \leq \epsilon_k.$$
We deduce that 
\beqan
 (\pi_k\wedge\mu_k)(\pi_{Q_k}(x)|x) &=& \min(\pi_k(\pi_{Q_k}(x)|x),1- \alpha) \\
 &=& 1-\alpha + O(\epsilon_k) \\
 &=& (1-\alpha) \pi_k(\pi_{Q_k}(x)|x) + O(\epsilon_k),
\eeqan
and 
\beqan
 (\pi_k\wedge\mu_k)(a\neq \pi_{Q_k}(x)|x) &=& O(\epsilon_k) \\
 &=& (1-\alpha) \pi_k(a|x) + O(\epsilon_k).
\eeqan

Thus Lemma~\ref{lem:commute} applies (with a multiplicative constant $1-\alpha$) and  $P^{\pi_k}$ and $P^{\pi_k\wedge\mu_k}$ asymptotically commute.  
\end{proof}

\section{Experimental Methods}\label{sec:experimental_methods}

Although our experiments' learning problem closely matches the DQN setting used
by \citet{mnih15human} (i.e. single-thread off-policy learning with large
replay memory), we conducted our trials in the multi-threaded, CPU-based
framework of \citet{mnih16asynchronous}, obtaining ample result data from
affordable CPU resources. Key differences from the DQN are as follows. Sixteen
threads with private environment instances train simultaneously; each infers
with and finds gradients w.r.t.~a local copy of the network parameters;
gradients then update a ``master'' parameter set and local copies are
refreshed. Target network parameters are simply shared globally. Each thread
has private replay memory holding 62,500 transitions (1/16\textsuperscript{th}
of DQN's total replay capacity). The optimizer is unchanged from
\citep{mnih16asynchronous}: ``Shared RMSprop'' with step size annealing to 0
over $3\times10^8$ environment frames (summed over threads). Exploration
parameter ($\epsilon$) behaviour differs slightly: every 50,000 frames, threads
switch randomly (probability 0.3, 0.4, and 0.3 respectively) between three
schedules (anneal $\epsilon$ from 1 to 0.5, 0.1, or 0.01 over 250,000 frames),
starting new schedules at the intermediate positions where they left old
ones.\footnote{We evaluated a DQN-style single schedule for $\epsilon$, but our
multi-schedule method, similar to the one used by \citeauthor{mnih16asynchronous},
yielded improved performance in our multi-threaded setting.}

Our experiments comprise 60 Atari 2600 games in ALE \citep{bellemare13arcade},
with ``life'' loss treated as episode termination. The control, minibatched (64
transitions/minibatch) one-step Q-learning as in \citep{mnih15human}, shows
performance comparable to DQN in our multi-threaded setup. Retrace, TB, and
Q\textsuperscript{*} runs use minibatches of four 16-step sequences (again 64
transitions/minibatch) and the current exploration policy as the target policy
$\pi$. All trials clamp rewards into $[-1, 1]$. In the control, Q-function
targets are clamped into $[-1, 1]$ prior to gradient calculation; analogous
quantities in the multi-step algorithms are clamped into $[-1, 1]$, then scaled
(divided by) the sequence length. Coarse, then fine logarithmic parameter
sweeps on the games \emph{Asterix}, \emph{Breakout}, \emph{Enduro},
\emph{Freeway}, \emph{H.E.R.O}, \emph{Pong}, \emph{Q*bert}, and \emph{Seaquest}
yielded step sizes of 0.0000439 and 0.0000912, and RMSprop regularization
parameters of 0.001 and 0.0000368, for control and multi-step algorithms
respectively. Reported performance averages over four trials with different
random seeds for each experimental configuration.

\subsection{Algorithmic Performance in Function of $\lambda$}

We compared our algorithms for different values of $\lambda$, using the DQN score as a baseline. As before, for each $\lambda$ we compute the inter-algorithm scores on a per-game basis. We then averaged the inter-algorithm scores across games to produce Table \ref{table:scores_in_function_of_lambda} (see also Figure \ref{fig:scores_in_function_of_lambda} for a visual depiction). We first remark that Retrace always achieve a score higher than TB, demonstrating that it is efficient in the sense of Section \ref{sec:off_policy_algorithms}. Next, we note that $Q^*$ performs best for small values of $\lambda$, but begins to fail for values above $\lambda = 0.5$. In this sense, it is also not safe. This is particularly problematic as the safe threshold of $\lambda$ is likely to be problem-dependent. Finally, there is no setting of $\lambda$ for which Retrace performs particularly poorly; for high values of $\lambda$, it achieves close to the top score in most games.  For Retrace($\lambda$) it makes sense to use a values $\lambda=1$ (at least in deterministic environments) as the trace cutting effect required in off-policy learning is taken care of by the use of the $\min(1,\pi/\mu)$ coefficient. On the contrary, $Q^*(\lambda)$ only relies on a value $\lambda<1$ to take care of cutting traces for off-policy data.

\begin{table}[htb!]
\center{
\begin{tabular}{|r|r|r|r|r|}
\hline
$\lambda$ & DQN & TB & Retrace & $Q^*$ \\
\hline
\hline
0.0 &         0.5071 &  0.5512 &   0.4288 &   0.4487 \\
\hline
0.1 &         0.4752 &  0.2798 &   0.5046 &    0.651 \\
\hline
0.3 &         0.3634 &   0.268 &   0.5159 &   0.7734 \\
\hline
0.5 &         0.2409 &  0.4105 &   0.5098 &   0.8419 \\
\hline
0.7 &         0.3712 &  0.4453 &   0.6762 &   0.5551 \\
\hline
0.9 &         0.7256 &  0.7753 &   0.9034 &  0.02926 \\
\hline
1.0 &         0.6839 &  0.8158 &   0.8698 &  0.04317 \\
\hline
\end{tabular}
\vspace{0.5em}
}
\caption{Average inter-algorithm scores for each value of $\lambda$. The DQN scores are fixed across different $\lambda$, but the corresponding inter-algorithm scores varies depending on the worst and best performer within each $\lambda$.\label{table:scores_in_function_of_lambda}}
\end{table}

\begin{figure*}[htb]
\center{
\includegraphics{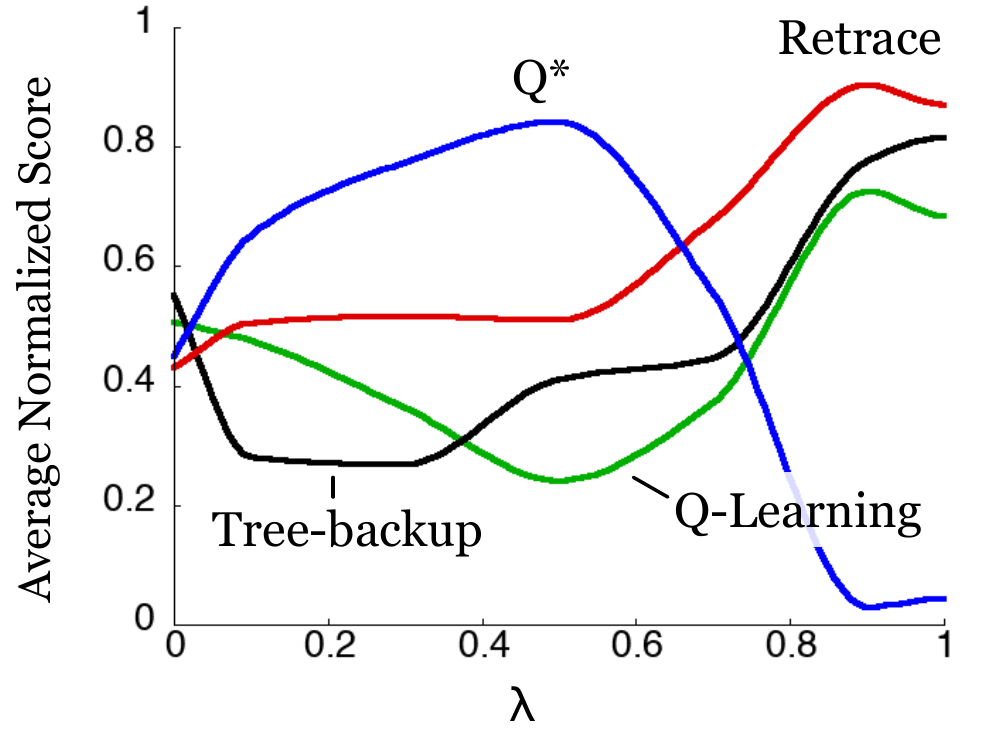}
}
\caption{Average inter-algorithm scores for each value of $\lambda$. The DQN scores are fixed across different $\lambda$, but the corresponding inter-algorithm scores varies depending on the worst and best performer within each $\lambda$. \textbf{Note that average scores are not directly comparable across different values of $\lambda$.}\label{fig:scores_in_function_of_lambda}}
\end{figure*}

\begin{table}
\center{
\small
\begin{tabular}{|r|r|r|r|r|}
\hline
& Tree-backup($\lambda$) & Retrace($\lambda$) & DQN & Q$^*$($\lambda$) \\
\hline
\textsc{Alien} &  2508.62  & \textbf{\textcolor{blue}{3109.21}}  &  2088.81  &  154.35   \\
\hline
\textsc{Amidar} &  1221.00  & \textbf{\textcolor{blue}{1247.84}}  &  772.30  &  16.04   \\
\hline
\textsc{Assault} &  7248.08  & \textbf{\textcolor{blue}{8214.76}}  &  1647.25  &  260.95   \\
\hline
\textsc{Asterix} & \textbf{\textcolor{blue}{29294.76}}  &  28116.39  &  10675.57  &  285.44   \\
\hline
\textsc{Asteroids} &  1499.82  & \textbf{\textcolor{blue}{1538.25}}  &  1403.19  &  308.70   \\
\hline
\textsc{Atlantis} & \textbf{\textcolor{blue}{2115949.75}}  &  2110401.90  &  1712671.88  &  3667.18   \\
\hline
\textsc{Bank Heist} & \textbf{\textcolor{blue}{808.31}}  &  797.36  &  549.35  &  1.70   \\
\hline
\textsc{Battle Zone} &  22197.96  & \textbf{\textcolor{blue}{23544.08}}  &  21700.01  &  3278.93   \\
\hline
\textsc{Beam Rider} &  15931.60  & \textbf{\textcolor{blue}{17281.24}}  &  8053.26  &  621.40   \\
\hline
\textsc{Berzerk} &  967.29  & \textbf{\textcolor{blue}{972.67}}  &  627.53  &  247.80   \\
\hline
\textsc{Bowling} &  40.96  & \textbf{\textcolor{blue}{47.92}}  &  37.82  &  15.16   \\
\hline
\textsc{Boxing} &  91.00  &  93.54  & \textbf{\textcolor{blue}{95.17}}  &  -29.25   \\
\hline
\textsc{Breakout} &  288.71  &  298.75  & \textbf{\textcolor{blue}{332.67}}  &  1.21   \\
\hline
\textsc{Carnival} & \textbf{\textcolor{blue}{4691.73}}  &  4633.77  &  4637.86  &  353.10   \\
\hline
\textsc{Centipede} &  1199.46  &  1715.95  &  1037.95  & \textbf{\textcolor{blue}{3783.60}}   \\
\hline
\textsc{Chopper Command} &  6193.28  & \textbf{\textcolor{blue}{6358.81}}  &  5007.32  &  534.83   \\
\hline
\textsc{Crazy Climber} & \textbf{\textcolor{blue}{115345.95}}  &  114991.29  &  111918.64  &  1136.21   \\
\hline
\textsc{Defender} &  32411.77  & \textbf{\textcolor{blue}{33146.83}}  &  13349.26  &  1838.76   \\
\hline
\textsc{Demon Attack} &  68148.22  & \textbf{\textcolor{blue}{79954.88}}  &  8585.17  &  310.45   \\
\hline
\textsc{Double Dunk} & \textbf{\textcolor{blue}{-1.32}}  &  -6.78  &  -5.74  &  -23.63   \\
\hline
\textsc{Elevator Action} &  1544.91  &  2396.05  & \textbf{\textcolor{blue}{14607.10}}  &  930.38   \\
\hline
\textsc{Enduro} &  1115.00  & \textbf{\textcolor{blue}{1216.47}}  &  938.36  &  12.54   \\
\hline
\textsc{Fishing Derby} &  22.22  & \textbf{\textcolor{blue}{27.69}}  &  15.14  &  -98.58   \\
\hline
\textsc{Freeway} & \textbf{\textcolor{blue}{32.13}}  &  32.13  &  31.07  &  9.86   \\
\hline
\textsc{Frostbite} &  960.30  &  935.42  & \textbf{\textcolor{blue}{1124.60}}  &  45.07   \\
\hline
\textsc{Gopher} &  13666.33  & \textbf{\textcolor{blue}{14110.94}}  &  11542.46  &  50.59   \\
\hline
\textsc{Gravitar} &  30.18  &  29.04  & \textbf{\textcolor{blue}{271.40}}  &  13.14   \\
\hline
\textsc{H.E.R.O.} & \textbf{\textcolor{blue}{25048.33}}  &  21989.46  &  17626.90  &  12.48   \\
\hline
\textsc{Ice Hockey} & \textbf{\textcolor{blue}{-3.84}}  &  -5.08  &  -4.36  &  -15.68   \\
\hline
\textsc{James Bond} &  560.88  &  641.51  & \textbf{\textcolor{blue}{705.55}}  &  21.71   \\
\hline
\textsc{Kangaroo} &  11755.01  & \textbf{\textcolor{blue}{11896.25}}  &  4101.92  &  178.23   \\
\hline
\textsc{Krull} & \textbf{\textcolor{blue}{9509.83}}  &  9485.39  &  7728.66  &  429.26   \\
\hline
\textsc{Kung-Fu Master} &  25338.05  & \textbf{\textcolor{blue}{26695.19}}  &  17751.73  &  39.99   \\
\hline
\textsc{Montezuma's Revenge} & \textbf{\textcolor{blue}{0.79}}  &  0.18  &  0.10  &  0.00   \\
\hline
\textsc{Ms. Pac-Man} &  2461.10  & \textbf{\textcolor{blue}{3208.03}}  &  2654.97  &  298.58   \\
\hline
\textsc{Name This Game} & \textbf{\textcolor{blue}{11358.81}}  &  11160.15  &  10098.85  &  1311.73   \\
\hline
\textsc{Phoenix} &  13834.27  & \textbf{\textcolor{blue}{15637.88}}  &  9249.38  &  107.41   \\
\hline
\textsc{Pitfall} & \textbf{\textcolor{blue}{-37.74}}  &  -43.85  &  -392.63  &  -121.99   \\
\hline
\textsc{Pooyan} &  5283.69  & \textbf{\textcolor{blue}{5661.92}}  &  3301.69  &  98.65   \\
\hline
\textsc{Pong} & \textbf{\textcolor{blue}{20.25}}  &  20.20  &  19.31  &  -20.99   \\
\hline
\textsc{Private Eye} &  73.44  & \textbf{\textcolor{blue}{87.36}}  &  44.73  &  -147.49   \\
\hline
\textsc{Q*Bert} &  13617.24  & \textbf{\textcolor{blue}{13700.25}}  &  12412.85  &  114.84   \\
\hline
\textsc{River Raid} &  14457.29  & \textbf{\textcolor{blue}{15365.61}}  &  10329.58  &  922.13   \\
\hline
\textsc{Road Runner} &  34396.52  &  32843.09  & \textbf{\textcolor{blue}{50523.75}}  &  418.62   \\
\hline
\textsc{Robotank} &  36.07  &  41.18  & \textbf{\textcolor{blue}{49.20}}  &  5.77   \\
\hline
\textsc{Seaquest} &  3557.09  &  2914.00  & \textbf{\textcolor{blue}{3869.30}}  &  175.29   \\
\hline
\textsc{Skiing} &  -25055.94  &  -25235.75  &  -25254.43  & \textbf{\textcolor{blue}{-24179.71}}   \\
\hline
\textsc{Solaris} &  1178.05  &  1135.51  & \textbf{\textcolor{blue}{1258.02}}  &  674.58   \\
\hline
\textsc{Space Invaders} & \textbf{\textcolor{blue}{6096.21}}  &  5623.34  &  2115.80  &  227.39   \\
\hline
\textsc{Star Gunner} &  66369.18  & \textbf{\textcolor{blue}{74016.10}}  &  42179.52  &  266.15   \\
\hline
\textsc{Surround} & \textbf{\textcolor{blue}{-5.48}}  &  -6.04  &  -8.17  &  -9.98   \\
\hline
\textsc{Tennis} &  -1.73  &  -0.30  & \textbf{\textcolor{blue}{13.67}}  &  -7.37   \\
\hline
\textsc{Time Pilot} &  8266.79  & \textbf{\textcolor{blue}{8719.19}}  &  8228.89  &  657.59   \\
\hline
\textsc{Tutankham} &  164.54  & \textbf{\textcolor{blue}{199.25}}  &  167.22  &  2.68   \\
\hline
\textsc{Up and Down} &  14976.51  & \textbf{\textcolor{blue}{18747.40}}  &  9404.95  &  530.59   \\
\hline
\textsc{Venture} &  10.75  &  22.84  & \textbf{\textcolor{blue}{30.93}}  &  0.09   \\
\hline
\textsc{Video Pinball} &  103486.09  & \textbf{\textcolor{blue}{228283.79}}  &  76691.75  &  6837.86   \\
\hline
\textsc{Wizard Of Wor} &  7402.99  & \textbf{\textcolor{blue}{8048.72}}  &  612.52  &  189.43   \\
\hline
\textsc{Yar's Revenge} &  14581.65  & \textbf{\textcolor{blue}{26860.57}}  &  15484.03  &  1913.19   \\
\hline
\textsc{Zaxxon} &  12529.22  & \textbf{\textcolor{blue}{15383.11}}  &  8422.49  &  0.40   \\
\hline
\hline
Times Best  & 16 & 30 & 12 & 2 \\
\hline
\end{tabular}
}
\caption{Final scores achieved by the different $\lambda$-return variants ($\lambda = 1$). Highlights indicate high scores.\label{fig:all_results}}
\label{tbl:performance_results}
\end{table}
}

\end{document}